\def\eqref#1{equation~\ref{#1}}
\def\1{\bm{1}}
\def\vb{{\bm{b}}}
\def\ve{{\bm{e}}}
\def\vh{{\bm{h}}}
\def\vm{{\bm{m}}}
\def\vq{{\bm{q}}}
\def\vr{{\bm{r}}}
\def\vs{{\bm{s}}}
\def\mE{{\bm{E}}}
\def\mH{{\bm{H}}}
\def\mW{{\bm{W}}}
\def\mX{{\bm{X}}}
\def\mZ{{\bm{Z}}}
\DeclareMathAlphabet{\mathsfit}{\encodingdefault}{\sfdefault}{m}{sl}
\SetMathAlphabet{\mathsfit}{bold}{\encodingdefault}{\sfdefault}{bx}{n}
\newcommand{\tens}[1]{\bm{\mathsfit{#1}}}
\def\tA{{\tens{A}}}
\def\tB{{\tens{B}}}
\def\gE{{\mathcal{E}}}
\def\gG{{\mathcal{G}}}
\def\gM{{\mathcal{M}}}
\def\gN{{\mathcal{N}}}
\def\gV{{\mathcal{V}}}
\def\sN{{\mathbb{N}}}
\def\sR{{\mathbb{R}}}
\newcommand{\oms}{\{\!\!\{}
\newcommand{\cms}{\}\!\!\}}
\newcommand{\bigoms}{\big\{\!\!\big\{}
\newcommand{\bigcms}{\big\}\!\!\big\}}
\newcommand{\model}{{HopeDGN}}
\newcommand{\modell}{{HopeDGN\ }}
\newtheorem{theorem}{Theorem}
\newtheorem{proposition}[theorem]{Proposition}
\title{Towards Dynamic Graph Neural Networks with Provably High-Order Expressive Power}
\author{Zhe Wang \\
Zhejiang University \\
\texttt{zhewangcs@zju.edu.cn} \\
\And
Tianjian Zhao\\
Zhejiang University \\
\texttt{3210101830@zju.edu.cn} \\
\And
Zhen Zhang \\
National University of Singapore \\
\texttt{zhen@nus.edu.sg} \\
\And
Jiawei Chen\\
Zhejiang University \\
\texttt{sleepyhunt@zju.edu.cn} \\
\And 
Sheng Zhou \\
Zhejiang University \\
\texttt{zhousheng\_zju@zju.edu.cn} \\
\And 
Yan Feng\\
Zhejiang University \\
\texttt{fengyan@zju.edu.cn} \\
\And 
Chun Chen \\
Zhejiang University \\
\texttt{chenc@zju.edu.cn} \\
\And 
Can Wang \\
Zhejiang University \\
\texttt{wcan@zju.edu.cn}
}
\begin{document}

\maketitle

\begin{abstract}
Dynamic Graph Neural Networks (DyGNNs) have garnered increasing research attention for learning representations on evolving graphs. 
Despite their effectiveness, the limited expressive power of existing DyGNNs hinders them from capturing important evolving patterns of dynamic graphs. 
Although some works attempt to enhance expressive capability with heuristic features, there remains a lack of DyGNN frameworks with provable and quantifiable high-order expressive power.
To address this research gap, we firstly propose the $k$-dimensional Dynamic WL tests ($k$-DWL) as the referencing algorithms to quantify the expressive power of DyGNNs.
We demonstrate that the expressive power of existing DyGNNs is upper bounded by the 1-DWL test. 
To enhance the expressive power, we propose \textbf{D}ynamic \textbf{G}raph Neural \textbf{N}etwork with \textbf{H}igh-\textbf{o}rder \textbf{e}xpressive \textbf{p}ower (\textbf{HopeDGN}), which updates the representation of central node pair by aggregating the interaction history with neighboring node pairs. 
Our theoretical results demonstrate that \modell can achieve expressive power equivalent to the 2-DWL test. 
We then present a Transformer-based implementation for the local variant of \model.
Experimental results show that \modell achieved performance improvements of up to 3.12\%, demonstrating the effectiveness of \model.

\end{abstract}

\section{Introduction}
Graph Neural Networks (GNNs) have emerged as dominant tools for learning low-dimensional representations of graph-structured data \citep{kipf2017semi, velivckovic2017graph, hamilton2017inductive, gasteiger2018predict}. However, many real-world graphs exhibit dynamic properties with continuously evolving topological structures. Due to this prevalence, an increasing number of research has focused on learning effective representations of dynamic graphs using Dynamic Graph Neural Networks (DyGNNs). Most DyGNNs employ a message-passing framework, where historically interacted nodes are aggregated using techniques such as sum-pooling \citep{wen2022trend}, local self-attention \citep{xu2020inductive, fan2021continuous}, and Transformers \citep{yu2023towards}. 
DyGNNs have been successfully applied to various tasks such as financial fraud detection \citep{huang2022dgraph}, traffic prediction \citep{han2021dynamic}, and sequential recommendation \citep{kumar2019predicting}.

One crucial requirement for designing (dynamic) GNNs is sufficient expressive power; that is, the (dynamic) GNNs should be capable of distinguishing non-isomorphic (dynamic) graphs. \citet{xu2019powerful} and \citet{morris2019weisfeiler} underscored that the expressive power of message-passing-based GNNs is bounded by the 1-Weisfeiler-Lehman (WL) test, which prompts extensive studies on GNNs with expressive power beyond 1-WL test \citep{maron2019provably, zhang2024beyond}.
However, these investigations have predominantly focused on static graphs.  As we discuss in Section \ref{sec::limit-expressive-power-dgnn}, existing DyGNNs remain facing limitations in expressive power when applied to dynamic graphs. 
Consequently, existing DyGNNs fail to detect some evolving substructures such as triangle structures (an illustrative example is provided in Figure \ref{fig::intro_fig}), which are important for capturing the evolution patterns of dynamic graphs \citep{paranjape2017motifs, zhou2018dynamic, zitnik2019evolution}. 
Few works have targeted at designing DyGNNs with stronger expressive power. \citet{souza2022provably} proposed relative positional features to enhance the expressive power of DyGNNs. However, from a theoretical perspective, it remains unclear how the relative positional features quantitatively affect DyGNNs' expressive power. \textit{To summarize, how to design DyGNNs with provably and quantitatively high-order expressive power remains unexplored.}

To address this research gap, we begin by presenting a theoretical framework to quantify the expressive power of existing DyGNNs. 
Specifically, we extend the Weisfeiler-Lehman (WL) hierarchy tests and propose the $k$-dimensional Dynamic WL (DWL) tests ($k\geq1$) as the referencing algorithms to check the isomorphism on dynamic graphs.
We demonstrate that the expressive power of existing DyGNNs is upper bounded by the proposed 1-DWL test. 
To enhance the expressive power of existing DyGNNs, we propose the Multi-Interacted Time Encoding (MITE), which encodes the bi-interaction history of target node pairs with other nodes, thereby capturing the indirect dependencies between target node pairs. 
MITE is a plug-and-play module that can be seamlessly integrated into a wide range of models. 
Equipped with MITE, we introduce the \textbf{D}ynamic \textbf{G}raph Neural \textbf{N}etwork with \textbf{H}igh-\textbf{o}rder \textbf{e}xpressive \textbf{p}ower (\textbf{\model}), which updates the representations of target node pairs by aggregating their neighboring node pairs as well as the multi-interaction history. Our theoretical results demonstrate that \modell can achieve expressive power equivalent to the 2-DWL test with injective aggregation and updating functions. We further present a Transformer-based implementation of the local version of the proposed \model. Experimental results demonstrate that the proposed \modell achieves superior performance on seven datasets compared to other baselines, underscoring the effectiveness of the proposed \model. In summary, the main contributions of this work are three-fold:
\begin{itemize}
    \item We establish a theoretical framework to quantify the expressive power of DyGNNs, and prove that the expressive power of existing DyGNNs is upper bounded by the 1-DWL test. 
    \item We propose \modell which can achieve expressive power equivalent to the 2-DWL test, thus being provably and quantitatively more expressive than existing DyGNNs. 
    \item Extensive experiments on both link prediction and node classification tasks demonstrate the superiority of the proposed \modell over existing models.
\end{itemize}

\begin{figure}[t]
    \centering
    \includegraphics[width=0.9\textwidth]{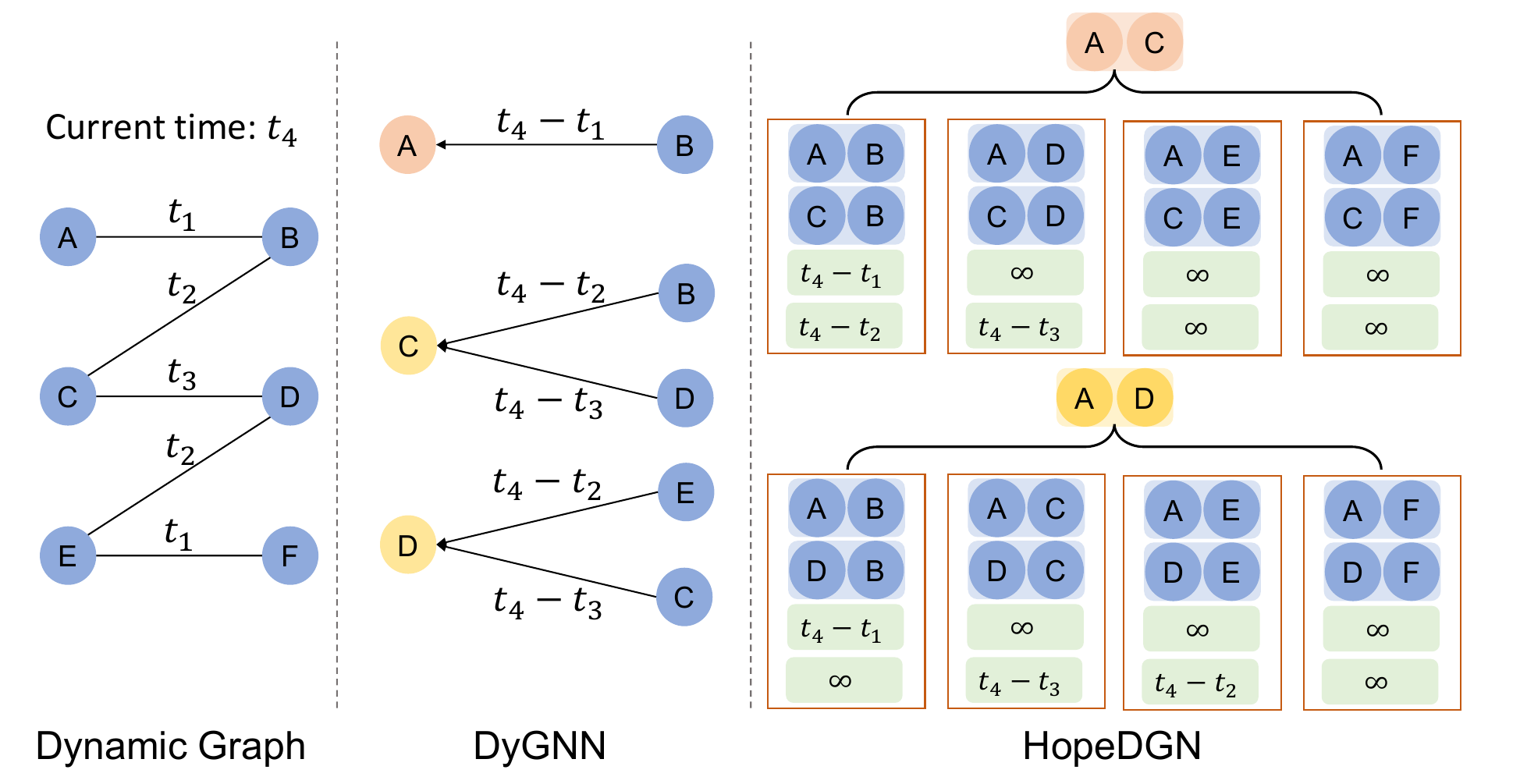}
    \caption{An example of limited expressive power of DyGNNs. Suppose the model is distinguishing node pairs $(A,C)$ and $(A,D)$ at time $t_4$. Because $A$ and $C$ have historical interaction with $B$ while $A$ and $D$ do not have common historical interacted nodes, $(A,C)$ and $(A,D)$ are not isomorphic at time $t_4$. Since nodes $C$ and $D$ are isomorphic on the historical interaction graph before $t_4$, DyGNNs will output the same embeddings for $(C,t_4)$ and $(D,t_4)$. Thus, DyGNNs fails to distinguish $(A,C,t_4)$ and $(A,D,t_4)$. Conversely, \modell will notice that node $B$ interacts with $A$ and $C$ at $t_1$ and $t_2$ respectively, thus being capable of distinguish these node pairs. }
    \label{fig::intro_fig}
\end{figure}

\section{Related Works}
\paragraph{Dynamic Graphs Neural Networks.} 
A dynamic graph is a network whose topological structure or node attributes evolve over time. Depending on whether the timestamps are discrete or continuous, dynamic graphs can be categorized into Discrete-Time Dynamic Graphs (DTDGs) and Continuous-Time Dynamic Graphs (CTDGs). Existing studies on DTDGs typically integrate Graph Neural Networks (GNNs) with sequential models to learn structural representations of graph snapshots and their evolution patterns \citep{yu2017spatio, sankar2020dysat, you2022roland, zhu2023wingnn, zhang2023dyted}. Some approaches also utilize sequential models to update the weights of GNNs \citep{pareja2020evolvegcn}. Recently, CTDGs have emerged as a more general form of dynamic graphs, garnering increasing attention from the research community. Most existing works on CTDGs adopt an aggregation-then-update framework (see Section \ref{section:preliminary}) \citep{wen2022trend, souza2022provably}. 
Various aggregation techniques have been proposed, including local self-attention \citep{xu2020inductive}, Transformers \citep{yu2023towards, wang2024dynamic}, and MLP-mixers \citep{cong2023we}. Memory mechanisms have also been employed to retain long-term interaction information \citep{kumar2019predicting, trivedi2019dyrep, rossi2020temporal}. 
Additionally, some studies leverage temporal random walks to learn representations \citep{wang2021inductive, jin2022neural}. 
Compared to existing works, the proposed \modell learns representations of node pairs rather than individual nodes.  
More importantly, the proposed \modell achieves the equivalent expressive power of the 2-DWL test, which is significantly more powerful than existing DyGNNs.

\paragraph{Expressive power of GNNs.}
The expressive power of Graph Neural Networks (GNNs) is measured by their ability to distinguish non-isomorphic graphs. 
Since the seminal works of \citet{xu2019powerful} and \citet{morris2019weisfeiler} demonstrated that the expressive power of message-passing based GNNs is upper-bounded by the 1-WL test, extensive efforts have been made to enhance the expressive power of GNNs. Some methods proposed high-order GNNs that mimic the procedure of higher-order WL tests \citep{maron2018invariant, maron2019universality, azizian2020expressive, geerts2022expressiveness}. Other methods aggregated the learned node representations on pre-generated subgraphs \citep{cotta2021reconstruction, zhao2021stars, bevilacqua2021equivariant}. Furthermore, some works incorporated substructure information into the learning of node representations \citep{chen2020can, bouritsas2022improving, horn2021topological}. \citet{zhang2023rethinking} also proposed evaluating the expressive power of GNNs via graph biconnectivity. While the expressive power of static GNNs has been extensively studied, few works have investigated the expressive power of Dynamic GNNs (DyGNNs). \citet{souza2022provably} proposed a DyGNN with an expressive power equivalent to the 1-Temporal WL test, further enhanced by relative position features. \citet{gao2022equivalence} studied the equivalent expressive power of two types of dynamic graphs, namely time-then-graph and time-and-graph. Despite these efforts, DyGNNs with quantifiable high-order expressiveness are still lacking. 

\section{Preliminaries}
\label{section:preliminary}
\paragraph{Graph Isomorphism.} A \textit{graph} is defined as $\mathcal{G}=\{\mathcal{V}, \mathcal{E}\}$ where $\mathcal{V}=\{1,2,...,N\}$ is the node set and $\mathcal{E}=\{(u,v) \subseteq \mathcal{V} \times \mathcal{V} \}$ is the edge set. 
The $k$-node tuple is defined as $\vs=(v_1,...,v_k)$ with $v_i \in \gV$ and all $k$-node tuples constitutes the set $[\gV]^k$. 
The neighbor set of node $u$ is defined as $\mathcal{N}(u) =\{v | (u,v) \in \mathcal{E} \lor (v,u) \in \mathcal{E}\}$. 
Two graphs $\mathcal{G}=\{\gV, \gE\}$ and $\mathcal{G}' = \{\gV', \gE' \}$ are said \textit{isomorphic} if there exists a bijective mapping $\varphi: \mathcal{V} \rightarrow \mathcal{V}' $ such that $(u,v) \in \mathcal{E}$ if and only if $(\varphi(u),\varphi(v)) \in \mathcal{E}'$, denoted as $\mathcal{G} \cong \mathcal{G'}$. If $\mathcal{G}$ and $\mathcal{G}'$ are the same graphs, we call $\varphi$ an \textit{automorphism}. Given two $k$-node tuple $\vs$ and $\vs'$, we say $\vs$ and $\vs'$ are \textit{isomorphic} if there exists a graph isomorphic mapping $\varphi: \gV \rightarrow \gV'$ such that $u \in S$ if and only if $\varphi(u) \in S'$. A \textit{labeling} of $\mathcal{G}$ is a function that maps a $k$-node tuple $\vs$ to a label: $l: [\gV]^k \rightarrow \mathbb{N}$. 


\paragraph{Dynamic Graph. }
Unless otherwise specified, we use Dynamic Graph to denote Continuous-Time Dynamic Graph in the following sections. A \textit{Dynamic Graph} is defined as $\mathcal{DG}=(\mathcal{V}, \mathcal{E})$, where $\mathcal{V}=\{1,2,...,N\}$ is the node set and $\mathcal{E}=\{(u_1,v_1,t_1), (u_2,v_2,t_2),...\}$ with $t_i \leq t_{i+1}$ is a sequence of node interactions labeled with timestamps. $(u_i,v_i,t_i)$ represents that node $u_i$ and node $v_i$ have an interaction event at time $t_i$. The node feature matrix of $\mathcal{DG}$ is denoted $\mX \in \mathbb{R}^{|\mathcal{V}| \times d_N}$, and the edge feature matrix is denoted as $\mE \in \mathbb{R}^{d_E}$. For datasets without predefined node (edge) features, the node (edge) features are set as zero vectors. Note that the same node pair may interact multiple times in the dynamic graph. Given the historical interactions before time $t$, we aim to learn the temporal embeddings of each $k$-node tuple $\vs \in [\gV]^k$ at time $t$ with a mapping function $f: [\gV]^k \rightarrow \mathbb{R}^d$. $k=1$ and $k=2$ correspond to the temporal node embeddings and edge embeddings, respectively. The learned temporal embeddings can be leveraged for downstream tasks such as link prediction and node classification.

\paragraph{Dynamic Graph Neural Networks (DyGNNs).} The workflows of the DyGNN consist of two modules: $\textsc{AGG}$ and $\textsc{UPDATE}$. The $\textsc{AGG}$ module aggregates the messages of historical neighbors. The aggregation is then passed to the $\textsc{UPDATE}$ module to update the embedding of the root node. Specifically, the \textit{historical neighbor} of node $u$ at time $t$ is defined as $\mathcal{N}(u,t)=\{(w,t')|t'<t, (u,w,t')\in \mathcal{E} \lor (w,u,t')\in \mathcal{E}$\}. The 0-th layer embedding of node $u$ is the node feature $\vh_t^{(0)}(u)=\mX_u$. The $l$-th layer ($l>0$) embedding is computed as:
\begin{equation}
\label{eq::DyGNN-def}
\begin{split}
    \Tilde{\vh}_t^{(l)}(u) &= \textsc{AGG}(\oms( [\vh_{t'}^{(l-1)}(w) || \sigma(t-t')] | (w,t') \in \mathcal{N}(u,t)\cms), \\
    \vh_t^{(l)}(u) &= \textsc{UPDATE}(\vh_t^{(l-1)}(u), \Tilde{\vh}_t^{(l)}(u))
\end{split}
\end{equation}
where $\sigma: \sR^+ \rightarrow \sR^{d_T}$ projects the time interval to a vector and $||$ denotes the concatenation. $\oms \cdot \cms$ denotes the multiset. \textsc{AGG} can be implemented as local self-attention \citep{vaswani2017attention, xu2020inductive}, MLP-Mixer \citep{tolstikhin2021mlp, cong2023we}, etc. For link prediction task, to predict the existence of interaction $(u,v)$ at time $t$, the temporal embeddings $\vh_t(u)$ and $\vh_t(v)$ are merged to generate the probability. 
Some methods \citep{kumar2019predicting, rossi2020temporal} also leverage memory mechanisms to record the long-term historical interactions of each node. Specifically, the memory state of node $u$ at $t=0$ is initialized as $\vs_0(u)=\mX_u$. When an interaction associated with $u$ happens, say $(u,v,t)$, $\vs_u$ is updated as: 
\begin{equation}
\begin{split}
    \vm_t(u) &= \textsc{MSG}(\vs_{t^-}(u), \vs_{t^-}(v), \Delta t, \ve_{ij}(t)) \\
    \vs_t(u) &= \textsc{MEMUPD}(\vs_{t^-}(u), \vm_t(u))
\end{split}
\end{equation}
where \textsc{MSG} is a message function implemented as Multi-Layer Perception (MLP) or identity, and $\Delta t$ is the time interval since last update. \textsc{MEMUPD} is a memory update function usually implemented as a recurrent neural network such as GRU \citep{cho2014learning}. With the memory state, the 0-th layer node embedding is modified as $\vh_t^{(0)}(u)=\vs_t(u)$.

\section{Methods}
In this section, we propose the $k$-Dynamic WL (DWL) test based on the isomorphism on dynamic graphs, and prove that the expressive power of DyGNNs is upper bounded by 1-DWL test (Sec. \ref{sec::limit-expressive-power-dgnn}). To enhance the expressive power, we propose MITE, which allows DyGNNs to capture the temporal dependency between node pairs (Sec. \ref{sec::bite}). Equipped with MITE, we propose \model, which is as powerful as 2-DWL test (Sec. \ref{sec::2-dgnn}). Finally, we present a Transformer-based implementation of the local \model (Sec. \ref{sec::2-dgnn-impl}). Proofs for all propositions are provided in Appendix \ref{sec:appendix_proof}.

\subsection{Limited expressive power of DyGNN}
\label{sec::limit-expressive-power-dgnn}
In this section, we study the expressive power of DyGNNs, which are characterized by their capabilities to distinguish non-isomorphic dynamic graphs. In contrast to static graphs, the isomorphism of two dynamic graphs requires that the complete interaction time sequences of corresponding nodes are identical. However, most existing DyGNNs process mini-batches of interactions in chronological order, making it challenging to capture the global evolving structure of dynamic graphs. To this end, we propose \textit{Dynamic Adjacency Tensor}, which represents the interactions within the dynamic graph as a timestamp-labeled multigraph. 

\paragraph{Dynamic Adjacency Tensor.} Let $\mathcal{DG}=\{\mathcal{V}, \mathcal{G} \}$ be a dynamic graph and $T$ be the maximum interaction counts among all node pairs . The \textit{Dynamic Adjacency Tensor (DAT)} of $\mathcal{DG}$ is defined as a tensor $\tA \in \mathbb{R}^{ |\mathcal{V}| \times |\mathcal{V}| \times T}$, where $\tA_{u,v,:}=[t_1,t_2,...,t_{q(u,v)},\infty,..., \infty]$ with $t_i \leq t_{i+1}$ recording $(u,v)$'s interaction timestamp sequence $\{t_1, t_2, ..., t_{q(u,v)}\}$. $q(u,v)$ is the interaction count of the node pair $(u,v)$. $\infty$ is padded if $q(u,v)<T$. In addition, given the current time $t$, to depict the interaction timestamps before $t$, we define the \textit{Historical DAT (HDAT)} as: 
\begin{equation}
\tA^{<t}_{i,j,k} = 
\left\{
\begin{array}{ll}
    \tA_{i,j,k} & \text{if } \tA_{i,j,k} < t \\
    \infty & \text{else } 
\end{array}
\right.
\end{equation}
 
\paragraph{Isomorphism on Dynamic Graphs.} With DAT, we are now ready to define the isomorphism on dynamic graphs. Let $\mathcal{DG}=\{\mathcal{V}, \mathcal{E}\}$ and $\mathcal{DG}'=\{\mathcal{V}', \mathcal{E}'\}$ be two dynamic graphs, and $\tA$ and $\tA'$ be their corresponding DATs. We say $\mathcal{DG}$ and $\mathcal{DG}'$ are \textit{isomorphic} if there exists a bijective mapping $\varphi$: $\mathcal{V} \rightarrow \gV' $ such that $\tA_{i,j,:} = \tA'_{\varphi(i), \varphi(j), :}$ for all $(i,j) \in \gV \times \gV$, denoted as $\mathcal{DG} \cong \mathcal{DG}'$. If $\mathcal{DG}=\mathcal{DG}'$, we call $\varphi$ an \textit{automorphism}. Considering the HDAT, we say $\mathcal{DG}$ and $\mathcal{DG}'$ are \textbf{isomorphic until $\bm{t}$} if there exists bijective mapping $\varphi$: $\mathcal{V} \rightarrow \gV' $ such that $\tA^{<t}_{i,j,:} = \tA'^{<t}_{\varphi(i), \varphi(j), :}$ for all $(i,j) \in \gV \times \gV$, denoted as $(\mathcal{DG}, t) \cong (\mathcal{DG}', t)$. Additionally, we say two $k$-node tuples $\vs \in [\gV]^k$ and $\vs' \in [\gV']^k$ are isomorphic if there exists a bijection $\varphi$ from $\vs$ to $\vs'$ and $\varphi$ is an isomorphism from $\mathcal{DG}$ to $\mathcal{DG}'$. 

It is challenging to quantify the number of non-isomorphic dynamic graphs that an algorithm can distinguish. The Weisfeiler-Lehman (WL) test \citep{leman1968reduction} is a classical algorithm for determining graph isomorphism and is widely used to quantify the expressive power of Graph Neural Networks (GNNs) \citep{xu2019powerful, morris2019weisfeiler}. To quantify the expressive power of DyGNNs, we extend WL tests to dynamic graphs and propose Dynamic WL tests. 

\paragraph{Dynamic WL (DWL) tests.} To compute the color of the center node at a specific time, the 1-DWL test aggregates the color and complete interaction history of its neighbors, then hashes them into a unique node color. Specifically, given a dynamic graph $\mathcal{DG}=\{\mathcal{V}, \mathcal{E}\}$ and a node labeling function $l: \gV \rightarrow \sN$ at timestamp $t$, the 1-DWL test initializes the node color at $t$ as $c^{(0)}_t(u) = l(u)$. Then, at $j$-th iteration ($j>0$), the node color is refined as: 
\begin{equation}
\label{eq::1-DWL def}
    c^{(j)}_t(u) = \textsc{HASH}( c^{(j-1)}_t(u), \oms (c^{(j-1)}_t(w), \tA^{<t}_{u,v,:}) | (v, \cdot) \in \gN(u, t) \cms ) 
\end{equation}
where \textsc{HASH} is a hashing function. To test whether two graphs $\mathcal{DG}$ and $\mathcal{DG}'$ are isomorphic until $t$, we run 1-DWL test on both graphs in parallel. If the multisets of node colors in two graphs are not equal at any iteration, the 1-DWL test concludes that $\gG$ and $\gG'$ are not isomorphic until $t$. In addition, the $k$-DWL ($k \geq 2)$ tests process as follows. Let $ \vs =(v_1, ..., v_k)$ be a $k$-node tuple and $l$ be a node tuple labeling function, the $k$-DWL test initializes the node color of each $k$-node tuple as $c^{(0)}_t(\vs) = l(\vs)$. Then, at $j$-th iteration ($j>0$), the color of node tuple is refined as: 
\begin{equation}
\label{eq::k-DWL}
\begin{split}
    c^{(j)}_t(\vs) &= \textsc{HASH}\big (c^{(j-1)}_t(\vs), \oms \bm{\phi}_t^{(j-1)}(\vs, w) | w \in \gV \cms \big )  \\
\bm{\phi}^{(j-1)}_t(\vs, w) &= \Big(c^{(j-1)}_t\big (\vr_1(\vs, w)\big ),  ..., c^{(j-1)}_t\big(\vr_k(\vs, w)\big), \tA_{w, v_1, :}^{<t}, ..., \tA_{w, v_k, :}^{<t} \Big) 
\end{split}
\end{equation}
where $\vr_i(\vs, w) = (v_1, ...,v_{i-1}, w, v_{i+1}, ...,v_k) $. The following procedures work analogously to 1-DWL. Here the "neighboring node tuple" of $\vs$ is obtained by replacing each element in $\vs$ with other nodes. Intuitively, $k$-DWL test refines the color of the central 
$k$-node tuple at time $t$ by aggregating the colors and complete interaction history of neighboring node tuples. Note that the proposed $k$-DWL test has a similar procedure as the Folklore variant of $k$-WL test \citep{cai1992optimal} in static graphs, which groups and hashes the node tuple with the same replacing nodes. The following proposition states that $(k+1)$-DML test is at least as powerful as $k$-DWL test in distinguishing non-isomorphic dynamic graphs ($k \geq 1$), which  demonstrates that the proposed 
$k$-DWL tests provide a valid hierarchical framework for checking the isomorphism of dynamic graphs. 

\begin{proposition}
\label{proposition: dwl-hierachy}
Let $\mathcal{DG}= \{\gV, \gE\}$ and $\mathcal{DG}'= \{\gV', \gE'\}$ be two dynamic graphs. Suppose the initial labeling function of $k$-DWL test be constant. Then, for all $k \geq 1$, if $k$-DWL test decides $\mathcal{DG}$ and $\mathcal{DG}'$ are non-isomorphic, then $(k+1)$-DWL test also decides $\mathcal{DG}$ and $\mathcal{DG}'$ are non-isomorphic. 
\end{proposition}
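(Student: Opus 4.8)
The plan is to follow the classical Weisfeiler--Leman hierarchy argument, adapted to the dynamic setting: I would show that the $(k+1)$-DWL coloring \emph{refines} (i.e. determines) the $k$-DWL coloring under the projection that drops the last coordinate, and then convert this refinement into a statement about color multisets by a counting argument. Throughout, write $c^{(j)}_{t,m}(\cdot)$ for the color assigned by the $m$-DWL test at iteration $j$ and time $t$, and let $\pi(v_1,\dots,v_{k+1}) = (v_1,\dots,v_k)$ be the projection onto the first $k$ coordinates. At the start I would dispose of the trivial case $|\gV| \neq |\gV'|$: the color multisets then have different cardinalities ($|\gV|^k$ versus $|\gV'|^k$), so every $k$-DWL test separates the graphs and there is nothing to prove. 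Hence I assume $|\gV| = |\gV'| = n$.

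The core is a refinement lemma, proved by induction on $j$: for every $(k+1)$-node tuple $\vt = (v_1,\dots,v_{k+1})$, the color $c^{(j)}_{t,k+1}(\vt)$ determines $c^{(j)}_{t,k}(\pi(\vt))$, i.e. there is a function $\psi_j$ with $c^{(j)}_{t,k}(\pi(\vt)) = \psi_j\big(c^{(j)}_{t,k+1}(\vt)\big)$. The base case $j=0$ is immediate from the hypothesis that the initial labeling is constant, so both colorings are constant; this is exactly where the constant-initialization assumption enters. For the inductive step I would note that the $m$-DWL update hashes two ingredients: the previous color of the central tuple, and the multiset $\oms \bm{\phi}^{(j-1)}_t(\cdot, w) \mid w \in \gV \cms$. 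The first ingredient is recoverable by the inductive hypothesis. For the second, the key algebraic fact is that projection commutes with replacement in the first $k$ slots, $\pi\big(\vr_i(\vt, w)\big) = \vr_i\big(\pi(\vt), w\big)$ for $i \le k$, so each summand $\bm{\phi}^{(j-1)}_{t,k+1}(\vt, w)$ contains, among its components, the colors $c^{(j-1)}_{t,k+1}(\vr_i(\vt,w))$ (which determine $c^{(j-1)}_{t,k}(\vr_i(\pi(\vt),w))$ by induction) together with the timestamp entries $\tA^{<t}_{w,v_1,:},\dots,\tA^{<t}_{w,v_k,:}$. Discarding the surplus $(k+1)$-st color and the surplus timestamp entry $\tA^{<t}_{w,v_{k+1},:}$ therefore reconstructs $\bm{\phi}^{(j-1)}_{t,k}(\pi(\vt), w)$ from $\bm{\phi}^{(j-1)}_{t,k+1}(\vt, w)$ by a fixed map; applying this map elementwise carries the $(k+1)$-level multiset to the $k$-level multiset. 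Since both hash inputs of the $k$-DWL update are thus functions of the $(k+1)$-DWL hash input, injectivity of \textsc{HASH} furnishes the desired $\psi_j$.

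With the refinement established, I would conclude by proving the contrapositive via counting. Assume $(k+1)$-DWL does \emph{not} separate $\mathcal{DG}$ and $\mathcal{DG}'$, so at every iteration $j$ the multisets $\oms c^{(j)}_{t,k+1}(\vt) \cms$ over all $(k+1)$-tuples agree across the two graphs. Applying $\psi_j$ elementwise, the projected multisets $\oms c^{(j)}_{t,k}(\pi(\vt)) \cms$ also agree. Now each $k$-tuple $\vs$ has exactly $n$ preimages under $\pi$, namely $(\vs, w)$ for $w \in \gV$, so this projected multiset is precisely $n$ copies of the genuine $k$-DWL multiset $\oms c^{(j)}_{t,k}(\vs) \mid \vs \in [\gV]^k \cms$. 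Because $n = n'$, equality of the projected multisets forces equality of the $k$-DWL multisets at every $j$, so $k$-DWL also fails to separate the graphs; this is exactly the contrapositive of the claim.

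I expect the inductive step of the refinement lemma to be the main obstacle, specifically the bookkeeping that the per-$w$ aggregation term of $(k+1)$-DWL carries all the information needed to rebuild that of $k$-DWL for the projected tuple. This hinges on the commutation identity $\pi \circ \vr_i = \vr_i \circ \pi$ for $i \le k$ and on the timestamp block of $\bm{\phi}_{t,k+1}$ containing the timestamp block of $\bm{\phi}_{t,k}$ as a prefix. The counting step is routine once one observes that every fiber of $\pi$ has the common size $n$; the only subtlety there, namely $|\gV| \neq |\gV'|$, is removed at the outset.
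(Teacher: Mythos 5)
Your overall strategy coincides with the paper's: both arguments prove, by induction on the iteration, that the $(k+1)$-DWL color of a tuple determines the $k$-DWL color of its projection onto the first $k$ coordinates, and then transfer equality of color multisets from level $k+1$ down to level $k$. In fact, two details of your write-up are cleaner than the paper's: your explicit disposal of the case $|\gV|\neq|\gV'|$, and your fiber-counting argument (every $k$-tuple has exactly $n$ preimages under $\pi$, so the projected multiset is $n$ copies of the genuine level-$k$ multiset), which makes rigorous a step that the paper's final display only asserts.

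There is, however, a genuine gap in your inductive step at $k=1$. You assume that for every level $m$ the $m$-DWL update hashes the previous color together with the multiset $\oms \bm{\phi}^{(j-1)}_t(\cdot,w) \mid w \in \gV \cms$, and that $\bm{\phi}_{t,k}(\pi(\vt),w)$ is recovered from $\bm{\phi}_{t,k+1}(\vt,w)$ by discarding the surplus color and the surplus timestamp entry. This is true only for $m\geq 2$. The paper's 1-DWL test (Eq.~(\ref{eq::1-DWL def})) is \emph{not} the $k=1$ specialization of Eq.~(\ref{eq::k-DWL}): it aggregates only over the historical neighbors $(v,\cdot)\in\gN(u,t)$, and its hash input consists of pairs of a neighbor color and the interaction-timestamp vector, not $\bm{\phi}$ tuples ranging over all of $\gV$. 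So for $k=1$ there is no $\bm{\phi}_{t,1}$ to reconstruct, and your uniform argument fails on precisely the instance (1-DWL versus 2-DWL) that the paper cares most about. The gap is fixable, and the fix is what the paper's separate $k=1$ case supplies: from each element of the level-2 multiset one extracts the pair $\big(c^{(j)}_1(w), \tA^{<t}_{w,v_1,:}\big)$, and since the timestamp vector equals the all-$\infty$ vector exactly when $w$ has no historical interaction with $v_1$, the all-node multiset can be filtered down to the neighbor-restricted multiset that the 1-DWL update actually hashes. Without this filtering observation your refinement lemma is established only for $k\geq 2$.
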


Next, we show the expressive power of existing DyGNNs is strictly bounded by 1-DWL test. Specifically, at any iterations of 1-DWL test and DyGNNs, if 1-DWL assigns the same colors for nodes $u$ and $v$ at time $t$, then DyGNN will also output the same temporal embeddings of $u$ and $v$ at time $t$. 

\begin{proposition}
\label{proposition: 1-DWL}
Let $\mathcal{DG} = \{\gV, \gE\}$ and $\mathcal{DG}' = \{\gV', \gE'\}$ be two dynamic graphs, and $\mX$ and $\mX'$ be their corresponding node features. Given a node labeling function $l: \gV \rightarrow \sN$ satisfying $l(u) = l(v)$ if and only if $\mX_u = \mX'_v$ for any $u \in \gV$ and $v \in \gV'$. Let $c_t^{(j)}$ denotes the color at time $t$ obtained by 1-DWL test initialized with label function $l$ in the $j$-th iteration, and $\vh_t^{(j)}$ be the temporal node embeddings outputted by the DyGNN. Then for all $j \geq 0$, $c_t^{(j)}(u) = c_t^{(j)}(v)$ $\Longrightarrow$ $\vh_t^{(j)}(u) = \vh_t^{(j)}(v)$. 
\end{proposition}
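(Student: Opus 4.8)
The plan is to prove the statement by induction on the iteration/layer index $j$, exploiting that the WL hashing map \textsc{HASH} is injective, so that $c_t^{(j)}(u)=c_t^{(j)}(v)$ is equivalent to equality of every argument passed to \textsc{HASH} in Eq.~\ref{eq::1-DWL def}. Since the DyGNN aggregation in Eq.~\ref{eq::DyGNN-def} reads each neighbor's embedding $\vh_{t'}^{(j-1)}(w)$ at its \emph{interaction} time $t'$ rather than at the query time $t$, I would first strengthen the claim so that it holds at every query time simultaneously: for all $j\ge 0$, all query times $s$, and all $a\in\gV$, $b\in\gV'$, $c_s^{(j)}(a)=c_s^{(j)}(b)\Rightarrow \vh_s^{(j)}(a)=\vh_s^{(j)}(b)$. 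The base case $j=0$ is immediate, since $c_s^{(0)}(a)=l(a)$ and $\vh_s^{(0)}(a)=\mX_a$ are both independent of $s$: then $c_s^{(0)}(a)=c_s^{(0)}(b)$ means $l(a)=l(b)$, which by the hypothesis on $l$ gives $\mX_a=\mX'_b$, hence $\vh_s^{(0)}(a)=\vh_s^{(0)}(b)$.

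For the inductive step I would unfold one layer of both recursions at query time $t$. Injectivity of \textsc{HASH} turns $c_t^{(j)}(u)=c_t^{(j)}(v)$ into two facts: (i) $c_t^{(j-1)}(u)=c_t^{(j-1)}(v)$, and (ii) a bijection $w\leftrightarrow w'$ between the distinct historical neighbors of $u$ and of $v$ with $c_t^{(j-1)}(w)=c_t^{(j-1)}(w')$ and equal timestamp sequences $\tA^{<t}_{u,w,:}=\tA^{<t}_{v,w',:}$. These are exactly the two ingredients \textsc{UPDATE} needs: fact (i) together with the strengthened induction hypothesis at time $t$ yields $\vh_t^{(j-1)}(u)=\vh_t^{(j-1)}(v)$; and from (ii), equality of the sorted timestamp sequences induces a bijection between the interaction \emph{events} $(w,t')\in\gN(u,t)$ and $(w',t')\in\gN(v,t)$ pairing events that share the same $t'$, so the time encodings $\sigma(t-t')$ agree termwise. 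It then remains only to match the embedding part $\vh_{t'}^{(j-1)}(w)$ of each message.

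The hard part will be precisely this last point: matching $\vh_{t'}^{(j-1)}(w)$ with $\vh_{t'}^{(j-1)}(w')$ needs information about $w,w'$ at the \emph{past} time $t'$, whereas (ii) only supplies equality of colors at the query time $t$. To bridge this I would establish an auxiliary monotonicity lemma: for every $j$ and every $t'<t$, $c_t^{(j)}(w)=c_t^{(j)}(w')\Rightarrow c_{t'}^{(j)}(w)=c_{t'}^{(j)}(w')$, i.e.\ the color computed over the larger history window refines the color over any shorter one. This is proved by its own induction on $j$: unfolding $c_t^{(j)}$ as above, the matching bijection obtained from the $t$-window restricts to one on the $t'$-window, because truncating each equal sequence $\tA^{<t}_{\cdot}$ to its entries below $t'$ again yields equal sequences $\tA^{<t'}_{\cdot}$ (in particular a neighbor is active before $t'$ on one side iff its partner is on the other), while the inductive hypothesis turns matched $t$-colors into matched $t'$-colors. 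Granting this lemma, fact (ii) gives $c_{t'}^{(j-1)}(w)=c_{t'}^{(j-1)}(w')$, and the strengthened induction hypothesis applied at query time $t'$ yields $\vh_{t'}^{(j-1)}(w)=\vh_{t'}^{(j-1)}(w')$. Hence the two aggregation multisets coincide, $\Tilde{\vh}_t^{(j)}(u)=\Tilde{\vh}_t^{(j)}(v)$, and \textsc{UPDATE} closes the main induction.

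Finally I would note what the argument covers. As sketched it treats the plain message-passing DyGNN of Eq.~\ref{eq::DyGNN-def}; for the memory-augmented variant the $0$-th layer embedding $\vh_t^{(0)}(u)=\vs_t(u)$ depends on history rather than on $l(u)$ alone, so the base case must instead be absorbed into the induction by treating each memory update as consuming one interaction and showing the recurrent state $\vs_t(u)$ is itself determined by the $1$-DWL colors -- an analogous but more bookkeeping-heavy induction. I expect the bound to in fact be \emph{strict}, as foreshadowed by Figure~\ref{fig::intro_fig}, but the proposition only asserts the one-directional inequality, so no separating construction is needed here.
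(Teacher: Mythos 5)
Your proof is correct, and its skeleton matches the paper's: induction on $j$, a base case driven by the consistency assumption on $l$, and an inductive step that splits $c_t^{(j+1)}(u)=c_t^{(j+1)}(v)$ into root-color equality plus equality of the neighbor multisets $\oms (c_t^{(j)}(w), \tA^{<t}_{w,u,:}) \cms$. Where you genuinely diverge is in the handling of the final matching step, and your version is the more rigorous one. You correctly observed that Eq.~\ref{eq::DyGNN-def} aggregates each neighbor's embedding at its \emph{interaction} time, $\vh_{t'}^{(j-1)}(w)$, while the 1-DWL unfolding only yields color equality at the \emph{query} time $t$; you bridge this with a universally-quantified-over-query-time induction hypothesis plus an auxiliary monotonicity lemma ($c_t^{(j)}(w)=c_t^{(j)}(w') \Rightarrow c_{t'}^{(j)}(w)=c_{t'}^{(j)}(w')$ for $t'<t$), itself proved by truncating the matched timestamp sequences. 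The paper has no such lemma and in fact elides exactly this point: its final displayed multiset (Eq.~\ref{eq::proof1-res2}) pairs the query-time embedding $\vh_t^{(j)}(w)$ with the interval $t-t'$, which is not the message that Eq.~\ref{eq::DyGNN-def} actually aggregates, so its concluding sentence ("based on Eq.~\ref{eq::DyGNN-def}\dots") silently needs $\vh_{t'}^{(j)}(w)=\vh_{t'}^{(j)}(r)$ at the earlier time $t'$ --- precisely what your lemma supplies. In short, your route costs one extra nested induction but buys a proof that actually closes over the recursion the paper defines; the paper's shorter argument is only valid if one either accepts the time-refinement property implicitly or redefines the DyGNN to read neighbor embeddings at time $t$. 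Your scoping remark about the memory-augmented variant also matches the paper, which likewise defers that case to the cited result of Souza et al.\ rather than proving it within this proposition.
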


\citet{souza2022provably} proves that adding a memory mechanism will not change the expressive power of DyGNNs. Therefore, the expressive power of DyGNNs can be fully characterized by the 1-DWL test. Although the 1-DWL test is effective in detecting two non-isomorphic nodes in dynamic graphs, it often fails to detect two non-isomorphic multi-node tuples. The reason is that the 1-DWL test independently aggregates the historical neighbors of each node, but ignores the evolving dependency between multiple nodes such as common historical neighbors (see the example in Fig. \ref{fig::intro_fig}). These indirect dependencies are important for multi-node level tasks such as future link prediction.


\subsection{Multi-Interacted Time Encoding}
\label{sec::bite}
As stated in the previous section, 1-DWL and DyGNNs cannot capture the dependencies between multiple nodes. To address this limitation, we propose Multi-Interacted Time Encoding (MITE). Intuitively, MITE encodes the complete bi-interaction history of target node pairs with other nodes in the dynamic graph, thereby capturing dependency information such as common neighbors. Unlike static graphs, dynamic graphs may have multiple interactions between two nodes at different timestamps. Encoding the interaction time series provides valuable information, such as interaction frequency and the time interval since the last interaction, which aids in learning better representations. Specifically, Let $\mathcal{DG}=\{\gV, \gE\}$ be a dynamic graph and its DAT is denoted as $\tA$. At time $t$, the \textit{Time Interval Tensor (TIT)} $\tB^t \in \mathbb{R}^{|\gV| \times |\gV| \times T}$ is computed as: 
\begin{equation}
\label{eq::tit-def}
\tB^{t}_{i,j,k} = 
\left\{
\begin{array}{ll}
    t - \tA_{i,j,k}^{<t} & \text{if } \tA_{i,j,k}^{<t} < t \\
    \infty & \text{else } 
\end{array}
\right.
\end{equation}
Given the target node pair $\vs = (u,v)$ at time $t$, its MITE with respect to a node $w \in \gV$ is defined as: 
\begin{equation}
\label{eq::bite}
    \mX^t_{M,w} = f([\tB^{t}_{w,u,:} || \tB^{t}_{w,v,:}]) \in \mathbb{R}^{d_B}
\end{equation}
where $f(\cdot)$ is implemented as a two-layer MLP in our work. For implementations, a normalization operator such as logarithm is applied to $\tB$ due to its possible large variance. In addition, as the maximum interaction count $T$ may be very large, we preserve the last $K (K<T)$ non-infinite timestamps of $\tB_{w, \cdot, :}$. $\tB_{w, \cdot, :}$ is padded if the number of non-infinite timestamps is less than $K$. 

The proposed MITE can be integrated with existing DyGNNs by incorporating it with node features. As such, the DyGNN model will capture the dependency information of node pairs, which enhance the expressive power of original DyGNNs. The following proposition shows a case of non-isomorphic node pairs that DyGNNs with MITE can distinguish while vanilla DyGNNs cannot. 

\begin{proposition}
\label{proposition: bite}
There exists two dynamic graphs $\mathcal{DG}=\{\gV, \gE\}$ and $\mathcal{DG}'=\{\gV', \gE'\}$ which have non-isomorphic node pairs $\vs \in [\gV]^2$ and $\vs' \in [\gV']^2$ until some time $t$ that DyGNN with MITE can distinguish while vanilla DyGNN cannot. 
\end{proposition}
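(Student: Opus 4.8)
The plan is to prove the statement by an explicit construction, reusing the configuration illustrated in Figure~\ref{fig::intro_fig}. Concretely, I would take $\mathcal{DG}=\mathcal{DG}'$ to be the single four-node dynamic graph shown there, with interactions arranged so that $B$ interacts with $A$ at $t_1$ and with $C$ at $t_2$, while $D$ plays a role symmetric to $C$ in the pre-$t_4$ history but shares no common neighbor with $A$ (e.g.\ $D$ interacts at $t_2$ with a separate node $E$ that never touches $A$). I then set $\vs=(A,C)$ and $\vs'=(A,D)$ and fix the query time $t=t_4$. The proof then splits into three verifications: that $\vs$ and $\vs'$ are non-isomorphic until $t_4$, that a vanilla DyGNN assigns them identical representations, and that some DyGNN equipped with MITE separates them.

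For non-isomorphism, I would use the common-neighbor invariant: $A$ and $C$ possess the common historical neighbor $B$ (both $\tA^{<t_4}_{B,A,:}$ and $\tA^{<t_4}_{B,C,:}$ carry finite timestamps), whereas $A$ and $D$ have no common historical neighbor. Since any bijection realizing an isomorphism of node tuples must extend to a graph automorphism preserving the HDAT, and automorphisms preserve common-neighbor structure, no such map can send $(A,C)$ to $(A,D)$; hence $(\mathcal{DG},t_4)$ witnesses the two pairs as non-isomorphic. For the vanilla DyGNN, I would first argue that by the symmetric design $C$ and $D$ are $1$-DWL indistinguishable in the historical graph before $t_4$, so $c^{(j)}_{t_4}(C)=c^{(j)}_{t_4}(D)$ for all $j$; Proposition~\ref{proposition: 1-DWL} then gives $\vh^{(j)}_{t_4}(C)=\vh^{(j)}_{t_4}(D)$. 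Because the link-prediction readout forms the pair representation by applying a fixed merge function to $\vh_{t_4}(A)$ and $\vh_{t_4}(C)$ (respectively $\vh_{t_4}(D)$), the two pair embeddings coincide, so the vanilla DyGNN cannot distinguish $\vs$ and $\vs'$.

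Finally, to show MITE succeeds, I would compute the encoding at $w=B$. For the pair $(A,C)$, both $\tB^{t_4}_{B,A,:}$ and $\tB^{t_4}_{B,C,:}$ contain finite time-interval entries, so $\mX^{t_4}_{M,B}$ is a nontrivial vector; for $(A,D)$, the slice $\tB^{t_4}_{B,D,:}$ is entirely $\infty$ (padded), so the concatenated input to $f$ differs and $\mX^{t_4}_{M,B}$ changes. Since MITE is injected into the node features, the MITE-augmented inputs feeding the DyGNN for the two pairs are distinct; choosing injective \textsc{AGG}, \textsc{UPDATE}, and readout (as in the standard $1$-WL/GIN-style argument) then yields distinct pair representations, establishing that a DyGNN with MITE distinguishes $\vs$ from $\vs'$. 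The main obstacle I anticipate is twofold and concentrated in the last two steps: ensuring the construction has enough symmetry that $C$ and $D$ really are $1$-DWL-equivalent (otherwise the vanilla-failure claim collapses), and arguing rigorously that the MITE feature difference \emph{survives} aggregation rather than being washed out—this requires that MITE be concatenated (not averaged) into the features and that an injective finite-multiset aggregator exists, so that only existence of one separating DyGNN, not all, need be claimed.
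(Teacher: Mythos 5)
Your overall strategy is sound and genuinely different from the paper's, but the concrete construction you sketch has a real flaw, and it is exactly the one you flagged at the end. With interactions $(A,B,t_1)$, $(B,C,t_2)$, $(D,E,t_2)$ only, nodes $C$ and $D$ are \emph{not} 1-DWL equivalent: after one iteration $B$'s color encodes two interactions (at $t_1$ and $t_2$) while $E$'s encodes a single interaction (at $t_2$), so at the second iteration $C$ (whose sole neighbor is $B$) and $D$ (whose sole neighbor is $E$) receive different colors. Consequently a two-layer vanilla DyGNN with injective \textsc{AGG}/\textsc{UPDATE} already outputs $\vh_{t_4}(C)\neq\vh_{t_4}(D)$ and distinguishes the two pairs, so the ``vanilla DyGNN cannot distinguish'' half of the claim collapses. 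Since the proposition requires that \emph{no} vanilla DyGNN separates the pairs, Proposition~\ref{proposition: 1-DWL} can only be invoked after you have actually established 1-DWL equivalence of $C$ and $D$ at every iteration; acknowledging this as an obstacle is not the same as discharging it. The repair is cheap: add a sixth node $F$ with interaction $(E,F,t_1)$, so the historical graph before $t_4$ consists of two disjoint, time-identical paths $A\,{-}\,B\,{-}\,C$ and $F\,{-}\,E\,{-}\,D$ (this symmetric completion is what Figure~\ref{fig::intro_fig} implicitly relies on). The swap of the two paths is then an automorphism of the historical graph carrying $C$ to $D$, which gives 1-DWL equivalence at all iterations, and the rest of your argument (common-neighbor invariant for non-isomorphism of the pairs; all-$\infty$ versus finite slices of $\tB^{t_4}$ for the MITE separation, preserved by an injective aggregator) goes through.

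For comparison, the paper's proof constructs \emph{two} graphs in which \emph{both} target pairs have a common neighbor, but the bi-interaction timestamps differ ($t_1,t_2$ versus $t_3,t_4$); individual corresponding nodes are isomorphic across the graphs, so every vanilla DyGNN agrees on them, while MITE differs because the time-interval vectors $[t_5{-}t_1\,||\,t_5{-}t_2]$ and $[t_5{-}t_4\,||\,t_5{-}t_3]$ differ. Your (corrected) construction instead separates the pairs by the mere presence versus absence of a common neighbor. Both prove the proposition, but the paper's example is strictly more informative in context: in it the co-occurrence \emph{counts} coincide, so a count-based encoding like NCOE would also fail, isolating the value of encoding timestamps; your example would be caught by NCOE as well, so it proves the stated proposition but not MITE's advantage over NCOE.
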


\paragraph{Connections with Neighbor Co-Occurrence Encoding \citep{yu2023towards}.} \citet{yu2023towards} proposed the Neighbor Co-Occurrence Encoding (NCOE) which encodes the interaction count of the target node pairs to other nodes. For example, suppose the historical interaction sequences of nodes $u$ and $v$ are \{$a$, $b$, $a$\} and \{$b$, $b$, $a$, $c$\}, respectively, then the NCOEs of $a$, $b$, $c$ are $[2,1]$, $[1,2]$, $[0,1]$, respectively. Note that MITE degenerates to NCOE by setting $f$ in Eq. (\ref{eq::bite}) to output the number of non-infinity elements. Compared to NCOE, MITE additionally captures the timestamps information of bi-interaction, which contains richer semantic information. 

\subsection{Dynamic Graph Neural Network with High-order expressive power}
\label{sec::2-dgnn}
Equipped with MITE, in this section, we propose the \textbf{D}ynamic \textbf{G}raph Neural \textbf{N}etwork with \textbf{H}igh-\textbf{o}rder \textbf{e}xpressive \textbf{p}ower (\textbf{\model}), which works analogously to the 2-DWL test. \modell update the temporal embedding of a central \textit{node pair} by aggregating its neighboring node pairs as well as the their interaction history with central node pair. Specifically, 
given the dynamic graph $\mathcal{DG}=\{\gV, \gE\}$ and the node feature $\mX$. The TIT at time $t$ is denoted as $\tB^t$. The 0-th layer temporal embedding of the node pair $\vs = (u,v)$ at time $t$ is $\vh^{(0)}_t(\vs)=[\mX_u || \mX_v]$. Then, the $l$-th layer ($l>0$) embedding of $\vs$ at time $t$ is computed as: 
\begin{equation}
\label{eq::2-dgnn}
\begin{aligned}
     \vh^{(l)}_t(\vs) &= \textsc{UPDATE} \big (\vh^{(l-1)}_t(\vs), \Tilde{\vh}^{(l)}_t(\vs) \big ) \\
    \Tilde{\vh}^{(l)}_t(\vs) &= \textsc{AGG} \Big (\bigoms  \; \bm{\psi}_t(\vs, w) \; \big | \;  w \in \gV \, \bigcms \Big ) \\
    \bm{\psi}_t(\vs, w) &= \big[ f_1\Big( \, [\vh^{(l-1)}_t \big ((u,w)\big) \, || \, \vh^{(l-1)}_t \big ((v,w) \big ) \, ] \Big) \, \big|\big| \, 
     \underbrace{f_2\Big([\tB_{u,w,:}^t \, || \, \tB_{v,w,:}^t ] \Big )}_{\text{MITE of $w$}} \big]  
\end{aligned}
\end{equation}
where $f_1$ and $f_2$ are projecting functions, which are implemented as MLPs in our work. Here the replacing node $w$ can be chosen from entire node set $\gV$, thus we call this formulation as Global \model. However, the number of nodes may be enormous on large-scale dynamic graphs, and the computation cost of Eq. (\ref{eq::2-dgnn}) may be expensive. Therefore, we propose a local version of \model, which only takes the historical neighbors of $u$ and $v$ as replacing nodes: 
\begin{equation}
\begin{aligned}
     \vh^{(l)}_t(\vs) &= \textsc{UPDATE} \big (\vh^{(l-1)}_t(\vs), \Tilde{\vh}^{(l)}_t(\vs) \big ) \\
    \Tilde{\vh}^{(l)}_t(\vs) &= \textsc{AGG} \Big (\bigoms  \; \bm{\psi}_t(\vs, w) \; \big | \;  (w, \cdot) \in \mathcal{N}(u,t) \cup \mathcal{N}(v,t) \, \bigcms \Big ) \\
\end{aligned}
\end{equation}

Similar to Proposition \ref{proposition: 1-DWL}, we will now show that the expressive power of \modell is upper bounded by 2-DWL test, i.e., any non-isomorphic node pairs that can be distinguished by \modell will also be distinguished by 2-DWL test.

\begin{proposition}
\label{proposition:2DyGNN-bound}
Let $\mathcal{DG} = \{\gV, \gE\}$ and $\mathcal{DG}' = \{\gV', \gE'\}$ be two dynamic graphs, and $\mX$ and $\mX'$ be their corresponding node features. Given a node labeling function $l: [\gV]^2 \rightarrow \sN$ satisfying $l((u,v)) = l((u',v'))$ if and only if $ [\mX_u || \mX_v] = [\mX'_{u'} || \mX'_{v'}]$ for all $(u,v) \in [\gV]^2$ and $(u',v') \in [\gV']^2$. Let $c_t^{(j)}$ denotes the color at time $t$ obtained by 2-DWL test , initialized with label function $l$ in the $j$-th iteration, and $\vh_t^{(j)}$ be the temporal node embeddings output by the Global \model. Then for all $j \geq 0$, $c_t^{(j)} \big ((u,v) \big) = c_t^{(j)} \big ((u',v') \big) \Longrightarrow \vh_t^{(j)} \big ((u,v) \big) = \vh_t^{(j)} \big ((u',v') \big)$.   
\end{proposition}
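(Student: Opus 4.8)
The plan is to mirror the inductive argument behind Proposition~\ref{proposition: 1-DWL}, now lifted to the level of node pairs, and to prove by induction on the layer index $j$ the stronger statement that $c_t^{(j)}((a,b)) = c_t^{(j)}((a',b'))$ implies $\vh_t^{(j)}((a,b)) = \vh_t^{(j)}((a',b'))$ for \emph{all} pairs simultaneously (this all-pairs form is what lets the hypothesis be applied to the replaced pairs appearing inside the aggregation). For the base case $j=0$, the labeling hypothesis gives $c_t^{(0)}((u,v)) = c_t^{(0)}((u',v'))$ exactly when $[\mX_u || \mX_v] = [\mX'_{u'} || \mX'_{v'}]$, which is precisely the condition $\vh_t^{(0)}((u,v)) = \vh_t^{(0)}((u',v'))$, so the base case is immediate.

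For the inductive step I would assume the claim at layer $j-1$ and suppose $c_t^{(j)}((u,v)) = c_t^{(j)}((u',v'))$. Because the 2-DWL \textsc{HASH} is injective, equal output colors force both of its arguments to agree: the previous-layer colors $c_t^{(j-1)}((u,v)) = c_t^{(j-1)}((u',v'))$, and the neighbor multisets $\oms \bm{\phi}_t^{(j-1)}((u,v),w) \mid w \in \gV \cms = \oms \bm{\phi}_t^{(j-1)}((u',v'),w') \mid w' \in \gV' \cms$. Equality of these finite multisets produces a bijection $\pi\colon \gV \to \gV'$ with $\bm{\phi}_t^{(j-1)}((u,v),w) = \bm{\phi}_t^{(j-1)}((u',v'),\pi(w))$ for every $w$. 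Reading off the coordinates of $\bm{\phi}$ yields, for each $w$, the color identities $c_t^{(j-1)}((w,v)) = c_t^{(j-1)}((\pi(w),v'))$ and $c_t^{(j-1)}((u,w)) = c_t^{(j-1)}((u',\pi(w)))$, together with the history identities $\tA^{<t}_{w,u,:} = \tA'^{<t}_{\pi(w),u',:}$ and $\tA^{<t}_{w,v,:} = \tA'^{<t}_{\pi(w),v',:}$.

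With $\pi$ in hand, the remaining task is to certify that $\pi$ also equates the two Global \modell aggregation multisets $\bigoms \bm{\psi}_t((u,v),w) \mid w \in \gV \bigcms$ and $\bigoms \bm{\psi}_t((u',v'),w') \mid w' \in \gV' \bigcms$, for which it suffices to verify $\bm{\psi}_t((u,v),w) = \bm{\psi}_t((u',v'),\pi(w))$ term by term. The induction hypothesis converts the per-$w$ color identities into the embedding identities $\vh_t^{(j-1)}((u,w)) = \vh_t^{(j-1)}((u',\pi(w)))$ and $\vh_t^{(j-1)}((v,w)) = \vh_t^{(j-1)}((v',\pi(w)))$, so the $f_1$ block of $\bm{\psi}$ agrees; the $f_2$ block (the MITE term) agrees because each slice $\tB^t_{\cdot,w,:}$ is a deterministic function of $t$ and the corresponding slice of $\tA^{<t}$, and those slices coincide under $\pi$. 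Equal aggregation multisets are mapped by \textsc{AGG} to a common value $\Tilde{\vh}_t^{(j)}$, and feeding this together with the matching previous-layer embedding $\vh_t^{(j-1)}((u,v)) = \vh_t^{(j-1)}((u',v'))$ into \textsc{UPDATE} yields $\vh_t^{(j)}((u,v)) = \vh_t^{(j)}((u',v'))$, closing the induction.

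The step I expect to be the genuine obstacle, as opposed to routine scaffolding, is the coordinate matching inside the third paragraph, because 2-DWL and \modell enumerate neighboring pairs asymmetrically. The folklore-style replacement $\vr_1((u,v),w) = (w,v)$ hands us the color of the \emph{ordered} pair $(w,v)$, whereas $\bm{\psi}$ consumes $\vh_t^{(j-1)}((v,w))$; likewise the history coordinate $\tA^{<t}_{w,v,:}$ must be matched against the slice $\tB^t_{v,w,:}$ carrying the transposed index. Reconciling these requires the undirectedness of the dynamic graph, i.e.\ $\tA_{a,b,:} = \tA_{b,a,:}$, to align the time tensors, and a correspondingly strengthened induction hypothesis that also records colors of transposed pairs, so that the $(w,v)$ identity supplied by 2-DWL can be transported to the $(v,w)$ identity that $\bm{\psi}$ actually needs. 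Pinning down this transposition bookkeeping cleanly, and checking that it is compatible with the non-symmetric initialization $\vh_t^{(0)}((u,v)) = [\mX_u || \mX_v]$, is where the argument demands the most care.
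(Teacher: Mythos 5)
Your proof is correct and follows essentially the same route as the paper's: induction on the layer index, with \textsc{HASH} injectivity yielding (i) equality of the previous-layer colors and (ii) equality of the $\bm{\phi}$ multisets; a matching $\pi:\gV\rightarrow\gV'$ extracted from (ii); the inductive hypothesis applied coordinate-wise to turn color identities into embedding identities; the observation that $\tB^t$ is a deterministic function of $\tA^{<t}$ and $t$ to handle the time coordinates; and finally the fact that \textsc{AGG} and \textsc{UPDATE} produce equal outputs on equal inputs.

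The one substantive remark: the transposition issue you flag in your final paragraph is real, and the paper's own proof silently elides it. The paper reads off the coordinates of $\bm{\phi}_t^{(j)}((u,v),w)$ as $c_t^{(j)}((u,w))$ and $c_t^{(j)}((v,w))$, whereas by Eq. (\ref{eq::k-DWL}) the first replacement is $\vr_1((u,v),w)=(w,v)$, so what multiset equality actually delivers is $c_t^{(j)}((w,v))=c_t^{(j)}((\pi(w),v'))$, and similarly the time coordinate arrives as $\tA^{<t}_{w,v,:}$ rather than in the index order $\tB^t_{v,w,:}$ that $\bm{\psi}$ consumes. Your proposed repair is the right one: interaction events in $\gE$ are undirected, so $\tA_{a,b,:}=\tA_{b,a,:}$ aligns the time tensors, and a parallel induction (starting from the observation that the initial labeling respects transposition, since $[\mX_a||\mX_b]=[\mX'_{a'}||\mX'_{b'}]$ implies $[\mX_b||\mX_a]=[\mX'_{b'}||\mX'_{a'}]$) shows that 2-DWL colors are transposition-equivariant, after which the $(w,v)$ identity transports to the $(v,w)$ identity needed by $\bm{\psi}$. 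So your more careful version is, if anything, a strengthening of the paper's argument rather than a divergence from it.
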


Additionally, we will prove that if the $\textsc{UPDATE}$ , $\textsc{AGG}$, $f_1$ and $f_2$ in Eq. (\ref{eq::2-dgnn}) meet the injective requirement, the \modell is as powerful as the 2-DWL test, as shown in following proposition. 

\begin{proposition}
\label{proposition:2DyGNN-injective}
Let $\gM: [\gV]^2 \rightarrow \mathbb{R}^d$ be a Global \model. Suppose the 2-DWL test is initialized with a node labeling function $l: [\gV]^2 \rightarrow \sN$ satisfying $l((u,v)) = l((u',v'))$ if and only if $ [\mX_u || \mX_v] = [\mX'_{u'} || \mX'_{v'}]$ for all $(u,v) \in [\gV]^2$ and $(u',v') \in [\gV']^2$. If the $\textsc{AGG}$, $\textsc{UPDATE}$, $f_1$ and $f_2$ of $\gM$ are injective, then at any time $t$, if 2-DWL test assigns different colors to two node pairs, $\gM$ will also output different temporal embeddings of these two node pairs. 
\end{proposition}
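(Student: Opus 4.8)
The plan is to mirror the classical argument that an injective message-passing GNN matches the $1$-WL test, lifted to the node-pair setting, by proving the following refinement claim by induction on the layer/iteration index $j$: for every two node pairs $\vs,\vs'$, $c_t^{(j)}(\vs)\neq c_t^{(j)}(\vs')$ implies $\vh_t^{(j)}(\vs)\neq\vh_t^{(j)}(\vs')$. The assertion of the proposition is exactly the $j$-th instance of this claim, so proving it for all $j\geq0$ suffices; note its contrapositive reads $\vh_t^{(j)}(\vs)=\vh_t^{(j)}(\vs')\Rightarrow c_t^{(j)}(\vs)=c_t^{(j)}(\vs')$, which is the form I will invoke inside the induction. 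The base case $j=0$ is immediate from the labeling hypothesis: since $c_t^{(0)}(\vs)=l(\vs)$, $\vh_t^{(0)}(\vs)=[\mX_u\,||\,\mX_v]$, and $l$ separates node pairs exactly according to their concatenated features, distinct colors force distinct initial embeddings.

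For the inductive step I assume the claim at $j-1$ and suppose $c_t^{(j)}(\vs)\neq c_t^{(j)}(\vs')$. Because \textsc{HASH} is injective, its two arguments in Eq.~(\ref{eq::k-DWL}) cannot both agree, so either (a) $c_t^{(j-1)}(\vs)\neq c_t^{(j-1)}(\vs')$, or (b) the multisets $\oms \bm{\phi}^{(j-1)}_t(\vs,w)\mid w\in\gV\cms$ and $\oms \bm{\phi}^{(j-1)}_t(\vs',w')\mid w'\in\gV'\cms$ differ. In case (a) the inductive hypothesis yields $\vh_t^{(j-1)}(\vs)\neq\vh_t^{(j-1)}(\vs')$, and since \textsc{UPDATE} is injective and its first argument now differs, $\vh_t^{(j)}(\vs)\neq\vh_t^{(j)}(\vs')$ follows.

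Case (b) is the crux. Here I would establish that the model's per-node message $\bm{\psi}_t$ is at least as discriminative as the $2$-DWL tuple $\bm{\phi}^{(j-1)}_t$, i.e. $\bm{\psi}_t(\vs,w)=\bm{\psi}_t(\vs',w')$ implies $\bm{\phi}^{(j-1)}_t(\vs,w)=\bm{\phi}^{(j-1)}_t(\vs',w')$, so that $\bm{\phi}^{(j-1)}_t$ is a well-defined function of $\bm{\psi}_t$ on the latter's range. Granting this, if the $\bm{\psi}$-multisets were equal, then matching them along a bijection and pushing through this function would force the $\bm{\phi}$-multisets to be equal, contradicting (b); hence the $\bm{\psi}$-multisets differ. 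Injectivity of \textsc{AGG} then gives $\Tilde{\vh}^{(j)}_t(\vs)\neq\Tilde{\vh}^{(j)}_t(\vs')$, and injectivity of \textsc{UPDATE} (its second argument now differing) gives $\vh_t^{(j)}(\vs)\neq\vh_t^{(j)}(\vs')$. To prove the message-refinement claim I split the concatenation defining $\bm{\psi}_t$ in Eq.~(\ref{eq::2-dgnn}): injectivity of $f_1$ forces the embedding halves $\vh^{(j-1)}_t((u,w))$, $\vh^{(j-1)}_t((v,w))$ to equal their primed counterparts, whence the inductive hypothesis transfers the equality to the colors $c^{(j-1)}_t$ that populate $\bm{\phi}$; injectivity of $f_2$ forces the interval slices $\tB^t_{u,w,:}$, $\tB^t_{v,w,:}$ to match, and since $\tB^t_{i,j,k}=t-\tA^{<t}_{i,j,k}$ is invertible at fixed $t$, the timestamp slices in $\bm{\phi}$ match as well.

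The main obstacle is the coordinate bookkeeping in this last step: aligning the two halves of $\bm{\psi}_t$ with the four coordinates of $\bm{\phi}^{(j-1)}_t$. The $2$-DWL tuple for $\vs=(u,v)$ refers to the replaced pairs $\vr_1(\vs,w)=(w,v)$ and $\vr_2(\vs,w)=(u,w)$ and to the slices $\tA^{<t}_{w,u,:},\tA^{<t}_{w,v,:}$, whereas $\bm{\psi}_t$ is built from $(u,w),(v,w)$ and $\tB^t_{u,w,:},\tB^t_{v,w,:}$, so the two descriptions only coincide after invoking the symmetry of the undirected dynamic graph, $\tA_{i,j,:}=\tA_{j,i,:}$, which renders colors and embeddings invariant under swapping a pair's two entries and thereby identifies $c^{(j-1)}_t((v,w))$ with $c^{(j-1)}_t((w,v))$ and the reversed timestamp slices. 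I would also record the standing caveat that the equivalence concerns the untruncated Global $\model$, so that $\tB^t$ preserves all timestamp information carried by $\tA^{<t}$ and any normalization applied to $\tB^t$ is invertible; without this, $f_2$ cannot be injective on the information $\bm{\phi}$ demands, and the refinement claim would fail.
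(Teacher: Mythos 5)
Your overall strategy is sound and is essentially the paper's own argument viewed contrapositively: the paper's proof constructs, by induction on $j$, an injective map $\varphi$ with $\vh_t^{(j)} = \varphi(c_t^{(j)})$, which is exactly your refinement claim (colors differ $\Rightarrow$ embeddings differ) packaged as a function statement; your case analysis via injective $\textsc{HASH}$, $\textsc{UPDATE}$, $\textsc{AGG}$, $f_1$, $f_2$ together with the TIT$\leftrightarrow$DAT bijection matches the paper's chain of injective compositions step for step.

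The one genuine flaw is in your coordinate-bookkeeping step. You justify identifying $c_t^{(j-1)}((v,w))$ with $c_t^{(j-1)}((w,v))$ by asserting that undirected symmetry ``renders colors and embeddings invariant under swapping a pair's two entries.'' That pointwise invariance is false in general: the initial coloring separates pairs according to the \emph{ordered} concatenation $[\mX_u || \mX_v]$, so $c_t^{(0)}((u,v)) \neq c_t^{(0)}((v,u))$ (and $\vh_t^{(0)}((u,v)) \neq \vh_t^{(0)}((v,u))$) whenever $\mX_u \neq \mX_v$, and this asymmetry persists through later iterations. What your argument actually needs --- and what is true --- is the weaker pairwise statement: $c_t^{(j)}((a,b)) = c_t^{(j)}((a',b')) \Rightarrow c_t^{(j)}((b,a)) = c_t^{(j)}((b',a'))$. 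This reversal-consistency holds at $j=0$ (equality of ordered concatenations is preserved when both sides are swapped) and is preserved by the 2-DWL update via a routine parallel induction using $\tA_{i,j,:} = \tA_{j,i,:}$; substituting it for your invariance claim makes the matching of the two halves of $\bm{\psi}_t$ to the four coordinates of $\bm{\phi}_t^{(j-1)}$ go through unchanged. It is worth noting that the paper's own proof silently elides the same mismatch: it writes the $\textsc{HASH}$ argument with coordinates $(u,w)$, $(v,w)$ and slices $\tA^{<t}_{u,w,:}$, whereas \eqref{eq::k-DWL} defines $\bm{\phi}$ via $\vr_1(\vs,w)=(w,v)$, $\vr_2(\vs,w)=(u,w)$ and slices $\tA^{<t}_{w,u,:}$, $\tA^{<t}_{w,v,:}$ --- so your proposal is the only one of the two that names the issue, even though the justification you give for resolving it needs the repair above.
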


The injectiveness of each function in Global \modell can be approximated with MLP or other neural networks due to the universal approximation theorem \cite{hochreiter1997long}. 

\subsection{Implementation Details}
\label{sec::2-dgnn-impl}
In this section, we present the implementation details of local \model. Considering that some interactions may happen long time ago, we leverage Transformer \citep{vaswani2017attention} as the backbone due to its capability of modeling long-term dependency.   

\paragraph{Neighborhood Encodings.} Let $\mathcal{DG}=\{\gV, \gE\}$ be a dynamic graph, and its node feature and edge feature are denoted as $\mX \in \mathbb{R}^{ |\gV| \times d_N }$ and $\mE \in \mathbb{R}^{ |\gE| \times d_E }$, respectively. Based on Eq. (\ref{eq::2-dgnn}), given the target node pair $\vs=(u,v)$ at time $t$, we need to aggregate the joint neighborhood of $u$ and $v$, denoted as $\gN(u,t)$ and $\gN(v,t)$, respectively. Here we only learn from the one-hop joint neighborhood for efficiency of computation. For each $(w, t') \in \gN(u,t) \cup \gN(v,t)$, the combined node encodings of $w$ is represented as $\mX_{C,w} = [\mX_w || \mX_u || \mX_v] \in \mathbb{R}^{d_C}$. The edge encodings of $(w,t')$ are retrieved from $\mE$, denoted as $\mX_{E,w} \in \mathbb{R}^{d_E}$. Following \citet{xu2020inductive}, the time encoding of $w$ is learned by applying random Fourier feature on time interval $\Delta t = t-t'$, computed as $\mX_{T,w} = \sqrt{2 / d_T}[\cos(w_1\Delta t), \sin(w_1\Delta t), ...,\cos(w_{d_{T/2}} \Delta t), \sin(w_{d_{T/2}} \Delta t)] \in \mathbb{R}^{d_T} $. The MITE of $w$ with respect to $(u,v)$ is denoted as $X_{B, w} \in \mathbb{R}^{d_B}$ (Section \ref{sec::bite}). The corresponding encodings of the complete joint neighborhood are denoted as $\mX_{C} \in \mathbb{R}^{S \times d_C}$, $\mX_{E} \in \mathbb{R}^{S \times d_E}$, $\mX_{T} \in \mathbb{R}^{S \times d_T}$, $\mX_{M} \in \mathbb{R}^{S \times d_M}$ with $S=|\gN(u,t) \cup \gN(v,t)|$. 

\paragraph{Patching Technique.} Since the length of joint neighborhood may be very large, inspired by \citet{dosovitskiy2020image}, we leverage the patching technique to divide the neighborhood sequence into non-overlapping patches. Let $P$ denote the patch size. we take the combined node encoding $\mX_C$ as an example. $\mX_C \in \mathbb{R}^{S \times d_C}$ will be reshaped into $\mathbb{R}^{N_p \times (P \cdot d_C)}$ with $N_p=\lceil S / P \rceil$ (neighborhood sequence is padded if $S$ cannot be divided by $P$). Similarly, $\mX_{E}$, $\mX_{T}$ and $\mX_{M}$ will be reshaped into $\mathbb{R}^{N_p \times (P \cdot d_E)}$, $\mathbb{R}^{N_p \times (P \cdot d_T)}$ and  $\mathbb{R}^{N_p \times (P \cdot d_M)}$, respectively. 

\paragraph{Transformer Encoder.} Further, we apply a linear transformation to align the dimensions of various encodings. Specifically, given an encoding $\mX_{*} \in \mathbb{R}^{N_p \times (P \cdot d_*)}$, we apply learnable weights $\mW_* \in \mathbb{R}^{(P \cdot d_*) \times d}$ and bias $\vb_* \in \mathbb{R}^{d}$ on it ($*$ can be $C$, $E$, $T$, $M$): 
\begin{equation}
    \mZ_* = \mX_* \mW_* + \vb_*
\end{equation}
Then, we concatenate all these encodings $\mZ = [\mZ_C || \mZ_E || \mZ_T || \mZ_M ] \in \mathbb{R}^{N_p \times 4d}$. We set the input of \modell as $\mH^{(0)} = \mZ$. The $l$-th layer ($1 \leq l \leq L)$ of \modell is defined as: 
\begin{equation}
\begin{split}
    \Tilde{\mH}^{(l)} &= \text{MHSA}(\text{LN}(\mH^{(l-1)})) + \mH^{(l-1)} \\
    \mH^{(l)} &= \text{FFN}(\text{LN}(\Tilde{\mH^{(l)}})) + \Tilde{\mH}^{(l)}
\end{split}
\end{equation}
where \text{MHSA}, \text{LN} and \text{FFN} are the abbreviations of Multi-head Self-Attention, Layer Normalization and Feed-Forward Networks, respectively \citep{vaswani2017attention}. The input and output dimensions of \modell layer are set as same. After the final layer of Transformer encoder, the mean pooling with respect to the neighborhood is applied to obtain the embeddings of node pair $\vs = (u,v)$ at $t$: 
\begin{equation}
\label{eq::mean_pooling}
    \vh_t(\vs) = \text{MEAN}(\mH^{(L)}) \mW_{out} + \vb_{out} \in \mathbb{R}^{d_{out}}
\end{equation}
where $\mW_{out} \in \mathbb{R}^{4d \times d_{out}}$ and  $\vb_{out} \in \mathbb{R}^{d_{out}}$ are learnable weights. 

\paragraph{Computation complexity.} Given a batch of $B$ interactions, the expected cost of sampling the historical neighbors is $O(B\log(n_g))$, where $n_g$ is the average number of historical neighbors of temporal nodes in the dataset. Computing MITE costs $O(BS)$ since we traverse all the one-hop neighbors of the interactions in this batch. Forwarding propagation using the Transformer encoder costs $O(dS^2/P^2)$ since the input length has been reduced to $\lceil S / P \rceil$. Therefore, the overall complexity cost is $O(B(\log(n_g)+S)+dS^2/P^2)$. This complexity is same as the DyGFormer \citep{yu2023towards}. We will compare the efficiency of \modell and other baselines in Appendix \ref{sec::efficiency}.

\section{Experiments}
In this section, we conduct extensive experiments to evaluate the performance of the proposed \modell and MITE. Additional experiments are presented in Appendix \ref{sec::appendix_additional_experiments}.  

\subsection{Experimental Settings}
\paragraph{Datasets and Baselines.} Seven publicly available datasets are adopted for evaluation, namely, Reddit, Wikipedia, UCI, Enron, LastFM, MOOC and CanParl. These datasets are collected by \citet{poursafaei2022towards}. In addition, nine DyGNN baseline methods are leveraged for performance comparison, including JODIE \citep{kumar2019predicting}, DyRep \citep{trivedi2019dyrep}, TGAT \citep{xu2020inductive}, TGN \citep{rossi2020temporal}, CAWN \citep{wang2021inductive}, TCL \citep{wang2021tcl}, PINT \citep{souza2022provably}, GraphMixer \citep{cong2023we} and DyGFormer \citep{yu2023towards}.A detailed introduction to the datasets is presented in Appendix \ref{sec::appendix_datasets}. 

\paragraph{Evaluation Tasks and Metrics.} Our evaluation protocols closely follow \citet{xu2020inductive, rossi2020temporal}. Specifically, we adopt future link prediction and temporal node classification tasks for evaluation. For the link prediction task, we randomly sample negative node pairs and train using the Binary Cross Entropy (BCE) loss function. The future link prediction task is divided into \textit{transductive} and \textit{inductive} settings. For both settings, we split the total time range $[0,T]$ into three time intervals $[0, 0.7T)$, $[0.7T, 0.85T)$ and $[0.85T, T]$, and the interactions within each time interval formulate the training, validation and test sets, respectively. The model processes the interactions chronologically and predicts their existence based on interaction history. In the inductive setting, we randomly select 10\% of nodes from the test set as \textit{masking nodes}. The interactions associated with these masking nodes are removed during training, and the model is required to predict the interactions involving masking nodes only during the validation and testing phases. Average Precision (AP) and Area Under the Receiver Operating Characteristic curve (AUC) are used as evaluation metrics. The experimental settings of the dynamic node classification are presented in the Appendix \ref{sec::node_classification}.

\paragraph{Model Configurations.} We train the proposed \modell and other baseline methods for 50 epochs using the early stopping strategies of patience of 10. The model achieving the best performance on validation set is selected for testing. For all models, the optimizer, learning rate and batch size are set as 0.0001, 200 and Adam \citep{kingma2014adam}, respectively. We repeat the experiments three times with different random seeds and report the mean and standard deviation results. Other configurations of \modell and baseline methods are presented in Appendix \ref{sec::appendix_implementation_detail}. 

\subsection{Results and Discussion}
The AP results of the proposed \modell and other baselines on the link prediction task are presented in Table \ref{table::link_prediction_ap}. The AUC results are presented in Table \ref{table::link_prediction_auc}. From Table \ref{table::link_prediction_ap}, we have following observations. Firstly, under both transductive and inductive settings, the proposed \modell achieves the best performance on all datasets among the eight baseline methods. Specifically, the proposed \modell achieves an average AP improvement of 1.02\% and 1.31\% for transductive and inductive experiments over the second-best baselines, respectively. These results demonstrate the effectiveness of \model. Secondly, the MITE used in \modell is a generalized form of NCOE used in DyGFormer. Compared to DyGFormer, the proposed \modell shows an improvement of 4.56\% in the transductive setting and 5.06\% in the inductive setting on the MOOC dataset. This may be because the proposed \modell leverages MITE, which encodes the complete bi-interaction history to the target node pairs, thus enriching more semantic information than NCOE used in DyGFormer. The experimental results of the dynamic node classification are presented in the Appendix \ref{sec::node_classification}.

\begin{table}[t]
\centering
\caption{The AP results of link prediction experiments. The values are multiplied by 100. The values of the best and second best performance are highlighted in \textbf{bold} and \underline{underlined}, respectively.}
\label{table::link_prediction_ap}
\resizebox{0.99\textwidth}{!}
{
\setlength{\tabcolsep}{0.9mm}
{
\begin{tabular}{ccccccccc}
\toprule[1.0pt]
                               & Model     & Reddit        & Wikipedia            & UCI        & LastFM      & Enron          & MOOC & CanParl              \\
\midrule
\multirow{8}{*}{\rotatebox{90}{Transductive}} 
           & JODIE        &  98.24$\pm$0.05  & 95.58$\pm$0.07  & 88.70$\pm$0.12  &  72.94$\pm$1.47 & 80.31$\pm$4.02  & 79.31$\pm$1.50  & 69.30$\pm$0.36 \\
           & DyRep        &  98.07$\pm$0.13  & 94.08$\pm$0.12  & 60.31$\pm$3.35  &  71.54$\pm$0.19 & 78.82$\pm$0.92  & 80.15$\pm$0.93  & 70.17$\pm$2.85 \\
           & TGAT         &  98.18$\pm$0.03  & 96.74$\pm$0.16  & 79.51$\pm$0.33  &  72.99$\pm$0.29 & 68.17$\pm$1.15  & 84.04$\pm$0.43  & 70.23$\pm$3.08 \\
           & TGN          &  98.62$\pm$0.02  & 98.15$\pm$0.07  & 90.25$\pm$0.26  &  77.15$\pm$2.01 & 85.73$\pm$1.60  & \underline{87.76$\pm$0.21}  & 68.82$\pm$1.01 \\
           & CAWN         &  99.11$\pm$0.00  & 98.77$\pm$0.02  & 94.92$\pm$0.01  &  89.15$\pm$0.00 & 88.92$\pm$0.19  & 79.78$\pm$0.16  & 71.13$\pm$1.50 \\
           & GraphMixer   &  96.89$\pm$0.00  & 96.67$\pm$0.04  & 92.97$\pm$0.68  &  75.63$\pm$0.15 & 82.24$\pm$0.01  & 81.96$\pm$0.11  & 77.53$\pm$0.11 \\
           & TCL          &  96.97$\pm$0.01  & 96.21$\pm$0.22  & 84.72$\pm$0.66  &  75.52$\pm$2.77 & 76.99$\pm$0.24  & 81.72$\pm$0.01  & 68.87$\pm$1.04 \\
           & PINT   &  99.03$\pm$0.01  & 98.78$\pm$0.01 & \underline{96.01$\pm$0.10} & 88.06$\pm$0.70 & 88.71$\pm$1.30  &  71.54$\pm$2.62 &  68.39$\pm$0.10  \\
           & DyGFormer    &  \underline{99.22$\pm$0.01}  & \underline{98.96$\pm$0.00}  & 95.43$\pm$0.14  &  \underline{91.90$\pm$0.04} & \underline{92.20$\pm$0.12}  & 85.63$\pm$0.34  & \underline{97.35$\pm$0.33} \\
           & \modell      &  \textbf{99.31$\pm$0.01}  & \textbf{99.17$\pm$0.03}  & \textbf{97.18$\pm$0.06}  &  \textbf{93.16$\pm$0.03} & \textbf{92.67$\pm$0.08}  & \textbf{90.19$\pm$0.29}  & \textbf{98.33$\pm$0.60} \\
           \cmidrule{2-9}
            & Relative imprv.(\%)      &  \textcolor{red}{0.09}  &  \textcolor{red}{0.21}  &  \textcolor{red}{1.22}  & \textcolor{red}{1.37}  &  \textcolor{red}{0.51}  &  \textcolor{red}{2.77} & \textcolor{red}{1.01} \\
\bottomrule[1.0pt]
\toprule[1.0pt]
                               & Model     & Reddit        & Wikipedia            & UCI        & LastFM      & Enron          & MOOC & CanParl              \\
\midrule
\multirow{8}{*}{\rotatebox{90}{Inductive}} 
           & JODIE   & 96.45$\pm$0.09  &  93.61$\pm$0.02  & 76.96$\pm$1.79  & 82.42$\pm$0.54  & 79.84$\pm$2.11  & 80.24$\pm$2.03  & 53.74$\pm$2.03 \\
           & DyRep   & 95.91$\pm$0.29  & 91.82$\pm$0.07  & 54.80$\pm$0.45  & 83.31$\pm$0.02  &  69.92$\pm$1.72  & 79.47$\pm$2.06  & 52.77$\pm$0.76 \\
           & TGAT    & 96.56$\pm$0.13  & 96.07$\pm$0.08  & 79.30$\pm$0.01  & 78.27$\pm$0.25  & 62.70$\pm$0.33  & 83.56$\pm$0.66  &  53.99$\pm$0.63 \\
           & TGN     & 97.35$\pm$0.09  & 97.51$\pm$0.03  & 84.26$\pm$1.31  & 85.02$\pm$0.96  & 78.71$\pm$3.99  & \underline{87.37$\pm$0.26}  & 54.43$\pm$2.92 \\
           & CAWN    & 98.65$\pm$0.02  & 98.20$\pm$0.02  & 92.29$\pm$0.12 & 91.50$\pm$0.00  & 85.26$\pm$0.15  & 80.98$\pm$0.16  & 55.64$\pm$0.23 \\
           & GraphMixer   & 94.93$\pm$0.01  & 96.11$\pm$0.07  & 90.82$\pm$0.58  & 82.22$\pm$0.32  & 75.42$\pm$0.08  & 80.34$\pm$0.21  &  56.19$\pm$0.52 \\
           & TCL   & 94.03$\pm$0.35  & 96.11$\pm$0.14  & 82.53$\pm$0.77  & 80.43$\pm$2.40  & 73.43$\pm$0.25  & 79.98$\pm$0.06  & 54.25$\pm$0.51 \\
          & PINT   & 98.25$\pm$0.04  & 98.38$\pm$0.04  & \underline{93.97$\pm$0.10}  & 91.76$\pm$0.70   & 81.05$\pm$2.40  & 73.10$\pm$2.92  & 49.57$\pm$1.57  \\
           & DyGFormer   & \underline{98.83$\pm$0.02}  & \underline{98.53$\pm$0.04}  & 93.66$\pm$0.13  & \underline{93.29$\pm$0.02}  & \underline{89.57$\pm$0.16}  & 85.04$\pm$0.33  & \underline{86.79$\pm$2.31} \\
           & \modell   & \textbf{98.97$\pm$0.01}  & \textbf{98.87$\pm$0.00}  & \textbf{95.56$\pm$0.05} & \textbf{94.45$\pm$0.08}  & \textbf{89.84$\pm$0.02}  & \textbf{90.10$\pm$0.34}  & \textbf{88.84$\pm$0.74} \\
            \cmidrule{2-9}
           & Relative imprv.(\%)      &  \textcolor{red}{0.15}  &  \textcolor{red}{0.34}  &  \textcolor{red}{1.69}  & \textcolor{red}{1.24}  & \textcolor{red}{0.30}   & \textcolor{red}{3.12}  &  \textcolor{red}{2.35} \\
\bottomrule[1.0pt]
\end{tabular}
}
}
\end{table}

\subsection{Ablation studies}
\begin{figure}[h]
    \centering
    \includegraphics[scale=0.42]{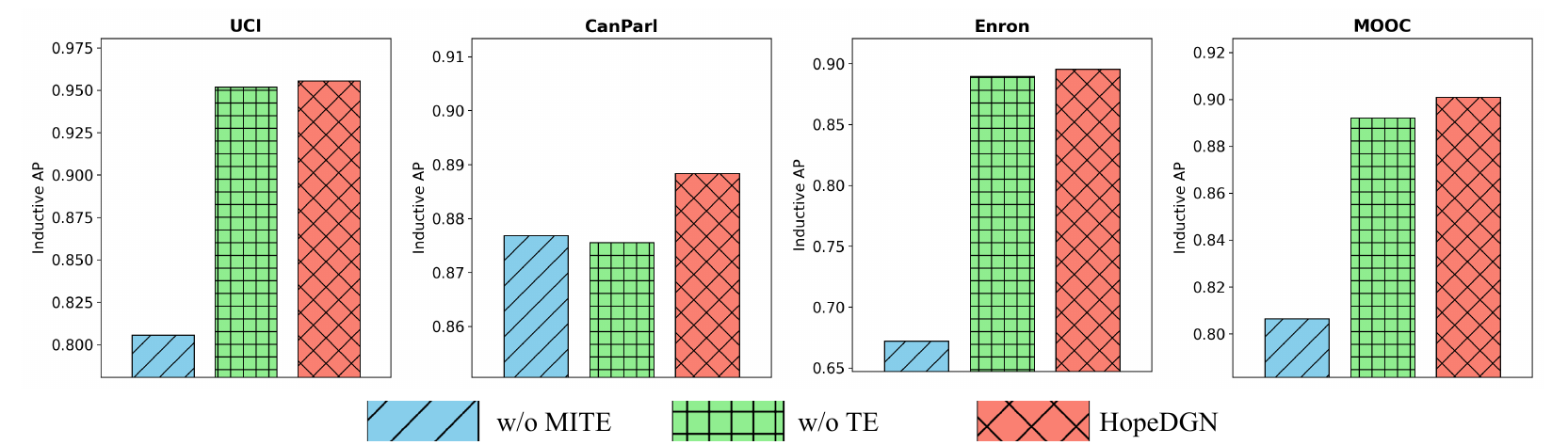}
    \caption{Ablation studies on the components of \model.}
    \label{fig::ablation}
\end{figure}

In this section, we conduct ablation studies to validate the effectiveness of key components of \modell, including MITE and Time Encoding (TE). We respectively remove MITE (denoted as "w/o MITE") and TE (denoted as "w/o TE"), and compare their performance with original model. The evaluating datasets include UCI, CanParl, Enron and MOOC. The results are presented in Fig. \ref{fig::ablation}. From Fig. \ref{fig::ablation}, we observe that the MITE plays the most significant role in the performance of \model, as removing this module causes significant performance drop. In addition, time encoding is vital for some datasets such as CanParl and MOOC. 

\subsection{Incorporating MITE with other baselines}
To validate the flexibility of the proposed MITE, we evaluate the performance of incorporating MITE with other baselines using link prediction tasks. The AP results are presented in Table \ref{table::incorporate_bite}. From Table \ref{table::incorporate_bite}, we observe that the performance of the baseline models significantly boosts after incorporating MITE. In particular, on the Enron dataset, the performance of TGAT improves by 33.21\% and 39.23\% for transductive and inductive settings, respectively, after incorporating MITE encoding. The reason may be that MITE can provide the indirect dependency information of node pairs as additional feature information, which is helpful for link prediction tasks. Note that the proposed \modell still achieves the best performance among all the baselines incorporating MITE. 

\begin{table}[h]
\centering
\caption{The performance of incorporating MITE with baselines on link prediction tasks. The values are multiplied by 100. The values of the best performance are highlighted in \textbf{bold}.}
\resizebox{0.92\textwidth}{!}
{
\setlength{\tabcolsep}{0.9mm}
{
\begin{tabular}{ccccccc}
\toprule
                   & \multicolumn{3}{c}{Transductive AP} & \multicolumn{3}{c}{Inductive AP} \\
\cmidrule(r){2-4} \cmidrule(l){5-7} 
                   & LastFM     & Enron     & MOOC     & LastFM    & Enron    & MOOC    \\
\midrule
TGAT               &   72.99$\pm$0.29          &   68.17$\pm$1.15      &   84.04$\pm$0.43        &    78.27$\pm$0.25       &  62.70$\pm$0.33      &   83.56$\pm$0.66       \\
TGAT w/ MITE       &   89.32$\pm$0.14         &  90.81$\pm$0.15       &   88.31$\pm$0.29        &  91.75$\pm$0.19         &   87.30$\pm$0.43     &    87.72$\pm$0.33      \\
\midrule
Relative Imprv. (\%)  &  \textcolor{red}{22.37}  &  \textcolor{red}{33.21}  & \textcolor{red}{5.08}  & \textcolor{red}{17.22} &  \textcolor{red}{39.23}  & \textcolor{red}{4.97} \\
\midrule
Graphmixer         &   75.63$\pm$0.15         &   82.24$\pm$0.01      &  81.96$\pm$0.11         &   82.22$\pm$0.32        &  75.42$\pm$0.08      &   80.34$\pm$0.21       \\
Graphmixer w/ MITE &  86.59$\pm$0.03          &   91.30$\pm$0.06      &   87.30$\pm$0.13        &  89.98$\pm$0.00         &   88.71$\pm$0.13     &    86.22$\pm$0.31      \\
\midrule
Relative Imprv. (\%)  &  \textcolor{red}{14.49}  &  \textcolor{red}{11.01}  & \textcolor{red}{6.51}  & \textcolor{red}{9.43} &  \textcolor{red}{17.62}  & \textcolor{red}{7.31} \\
\midrule
TCL                &    75.52$\pm$2.77        &   76.99$\pm$0.24      &  81.72$\pm$0.01         &   80.43$\pm$2.40        &   73.43$\pm$0.25     &   79.98$\pm$0.06       \\
TCL w/ MITE        &   89.60$\pm$0.01         &   90.53$\pm$0.14      &   88.36$\pm$0.01        &    92.23$\pm$0.00       &    88.17$\pm$0.08    &  87.44$\pm$0.09    \\
\midrule
Relative Imprv. (\%)  &  \textcolor{red}{18.64}  &  \textcolor{red}{17.58}  & \textcolor{red}{8.12}  & \textcolor{red}{14.67} &  \textcolor{red}{20.07}  & \textcolor{red}{9.32} \\
\midrule
\modell  &    \textbf{93.16$\pm$0.03}        &  \textbf{92.67$\pm$0.08}       &   \textbf{90.19$\pm$0.29}        &      \textbf{94.45$\pm$0.08}     &   \textbf{89.58$\pm$0.20}     &   \textbf{90.10$\pm$0.34}       \\
\bottomrule 
\end{tabular}
}
}
\label{table::incorporate_bite}
\end{table}

\section{Conclusions}
In this work, we propose a novel DyGNN framework that can achieve provable and quantifiable high-order expressive power. We propose the 
$k$-Dynamic WL (DWL) tests to quantify the expressive power of DyGNNs. We underscore that the expressive power of existing DyGNNs is bounded by the proposed 1-DWL test, which limits their capabilities to capture significant evolving patterns. To address this limitation, we propose \model, which learns node-pair level representation by aggregating interaction histories with neighboring node-pairs. We prove that \modell can achieve expressive power equivalent to the 2-DWL test. We present a Transformer-based implementation for the local variant \model. Experiments on link prediction and node classification tasks demonstrate the effectiveness of \model.

There are some promising future directions for this work. Firstly, the expressive power of the proposed \modell is bounded by 2-DWL test. It remains an open problem of designing DyGNNs with higher order expressiveness than 2-DWL test. Secondly, some recent works study the expressiveness of GNNs via alternative metrics beyond WL test such as graph bi-connectivity \citep{zhang2023rethinking}. Designing DyGNNs based on other metrics beyond DWL test may bring novel insights.

\bibliography{iclr2025_conference}
\bibliographystyle{iclr2025_conference}

\newpage
\appendix

\section{An introduction of Weisfeiler-Lehman test}
\label{sec::appendix-k-wl}
In this section, we briefly introduce the Weisfeiler-Lehman (WL) test. The WL tests for graph isomorphism \citep{leman1968reduction, xu2019powerful} are effective algorithms that have been proven capable of discriminating a broad class of non-isomorphic graphs. 

\paragraph{1-WL test.} Given a $\mathcal{G}=\{\gV, \gE\}$ with a labeling function $l$, at iteration 0, the 1-WL test initializes the color of each node $c^{(0)}=l$. At iteration $j>0$, the node color is refined as: 
\begin{equation}
    c^{(j)}(u) = \textsc{HASH}(c^{(j-1)}(u), \oms c^{(j-1)}(v) | v \in \mathcal{N}(u) \cms) 
\end{equation}
where \textsc{HASH} is a hashing function and $\oms \cdot \cms$ denotes the multiset. To test whether two graphs $\mathcal{G}$ and $\mathcal{G}'$ are isomorphic, we run 1-WL test on both graphs in parallel. If the multisets of node colors in two graphs are not equal at any iteration, the 1-WL test concludes that $\mathcal{G}$ and $\mathcal{G'}$ are non-isomorphic. 

Due to the limited expressive power of 1-WL test, the $k$-dimensional Weisfeiler-Lehman tests ($k \geq 2$) are proposed to serve as more powerful algorithms for checking graph isomorphism. In the literature, there exists two variants of $k$-WL test, known as Folklore $k$-WL test ($k$-FWL) \citep{cai1992optimal} and Oblivious $k$-WL test ($k$-OWL) \citep{grohe2021logic}. We will introduce the details of them. 

\paragraph{$k$-FWL test.} Given a graph $\gG=\{\gV, \gE\}$, and let $\vs=(v_1,...,v_k) \in [\gV]^k$ be a $k$-node tuple. Let $c^{(j)}: [\gV]^k \rightarrow \mathbb{N}$ be a $k$-node tuple coloring function at iteration $j$. At iteration 0, two tuples $\vs$ and $\vs'$ get the same color if there exists a isomorphism between $\vs$ and $\vs'$. Then at iteration $j$ ($j\geq 1$), the color of $\vs$ is updated as follows: 
\begin{equation}
\begin{split}
    c^{(j)}(\vs) &= \textsc{HASH}\big (c^{(j-1)}(\vs), \oms \bm{\phi}^{(j-1)}(\vs, w) | w \in \gV \cms \big )  \\
\bm{\phi}^{(j-1)}(\vs, w) &= \Big(c^{(j-1)}\big (\vr_1(\vs, w)\big ),  ..., c^{(j-1)}\big(\vr_k(\vs, w)\big) \Big) 
\end{split}
\end{equation}
where $\vr_i(\vs, w)= (v_1, ...,v_{i-1}, w, v_{i+1}, ...,v_k)$. Here the neighboring node tuples of $s$ is obtained by replacing each element in $\vs$ with other nodes. We run the algorithm on two graphs in parallel. If two color multisets are not equal at any iteration, the $k$-FWL test will output that these two graphs are non-isomorphic. The algorithm terminates if $c^{(j)}(\vs) =  c^{(j)}(\vs') \Longleftrightarrow c^{(j+1)}(\vs) =  c^{(j+1)}(\vs')$ holds for all $\vs, \vs' \in [\gV]^k$. 

\paragraph{$k$-OWL test.} At iteration $j$ ($j\geq 1$), $k$-OWL test has a slightly different update rule for the colors of $\vs \in [\gV]^k$: 
\begin{equation}
\begin{split}
    c^{(j)}(\vs) &= \textsc{HASH}\big (c^{(j-1)}(\vs), M^{(j-1)}(\vs) \big ) \\
    M^{(j-1)}(\vs) &= \Big (\oms c^{(j-1)} \big(\vr_1(\vs, w) \big) | w \in \gV \cms, ..., \oms c^{(j-1)} \big(\vr_k(\vs, w) \big) | w \in \gV \cms \Big )
\end{split}  
\end{equation}
Note that 1-OWL test and 2-OWL test have the same expressive power, and $(k+1)$-OWL test has the same expressive power as the $k$-FWL test for $k \geq 2$ \citep{grohe2021logic}. The reason why $k$-FWL test inherits more expressive power than $k$-OWL test is that $k$-FWL firstly groups the color of $k$-node tuple based on replacing nodes then makes aggregation, while $k$-OWL aggregates the colors for the single replacing node.

\section{Proof of Propositions}
\label{sec:appendix_proof}

\subsection{Proof of Proposition \ref{proposition: dwl-hierachy}}
We restate Proposition \ref{proposition: dwl-hierachy} as follows.  
\begin{proposition}
Let $\mathcal{DG}= \{\gV, \gE\}$ and $\mathcal{DG}'= \{\gV', \gE'\}$ be two dynamic graphs. Suppose the initial labeling function of $k$-DWL test be constant. Then, for all $k \geq 1$, if $k$-DWL test decides $\mathcal{DG}$ and $\mathcal{DG}'$ are non-isomorphic, then $(k+1)$-DWL test also decides $\mathcal{DG}$ and $\mathcal{DG}'$ are non-isomorphic. 
\end{proposition}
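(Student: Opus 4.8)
The plan is to prove the stronger statement that the $(k+1)$-DWL coloring \emph{refines} the $k$-DWL coloring round by round, and then convert this refinement into the desired distinguishing implication by a counting argument on color multisets. Throughout I fix the evaluation time $t$, write $c^{(j)}_t$ for the coloring of $k$-tuples produced by $k$-DWL and $\hat c^{(j)}_t$ for the coloring of $(k+1)$-tuples produced by $(k+1)$-DWL, and let $\pi(v_1,\ldots,v_{k+1}) = (v_1,\ldots,v_k)$ denote projection onto the first $k$ coordinates. First I would dispose of the trivial case: if $|\gV| \neq |\gV'|$, then under the constant initialization the round-$0$ color multisets already differ in cardinality ($|\gV|^k$ versus $|\gV'|^k$) for both tests, so both decide non-isomorphic; hence I may assume $n := |\gV| = |\gV'|$.

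The core is the following refinement lemma, proved by induction on the iteration $j$: for any two $(k+1)$-tuples $\vt,\vt'$ (in $\mathcal{DG}$ and $\mathcal{DG}'$, or within one graph), $\hat c^{(j)}_t(\vt) = \hat c^{(j)}_t(\vt') \implies c^{(j)}_t(\pi(\vt)) = c^{(j)}_t(\pi(\vt'))$. The base case $j=0$ is immediate from constant initialization, since every tuple of either dimension receives the same initial color. For the inductive step I would use injectivity of \textsc{HASH}: equality of $\hat c^{(j)}_t(\vt)$ and $\hat c^{(j)}_t(\vt')$ forces both (i) $\hat c^{(j-1)}_t(\vt) = \hat c^{(j-1)}_t(\vt')$ and (ii) equality of the aggregated multisets $\oms \bm{\phi}_t^{(j-1)}(\vt,w) \mid w \in \gV \cms = \oms \bm{\phi}_t^{(j-1)}(\vt',w') \mid w' \in \gV' \cms$, the latter witnessed by a bijection $\beta$ matching each $w$ to some $w'$ with $\bm{\phi}_t^{(j-1)}(\vt,w) = \bm{\phi}_t^{(j-1)}(\vt',\beta(w))$. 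From (i) the induction hypothesis yields $c^{(j-1)}_t(\pi(\vt)) = c^{(j-1)}_t(\pi(\vt'))$, the first \textsc{HASH} argument needed to reconstruct the $k$-DWL color. For the aggregate, for each replacement index $i \le k$ one has $\pi(\vr_i(\vt,w)) = \vr_i(\pi(\vt),w)$, so the induction hypothesis applied to the matched pair $\vr_i(\vt,w),\vr_i(\vt',\beta(w))$ recovers equality of their $k$-DWL colors, while the HDAT entries $\tA^{<t}_{w,v_m,:}$ for $m \le k$ appear verbatim among the matched components of $\bm{\phi}_t^{(j-1)}$. Discarding the surplus $i=k+1$ color and $m=k+1$ tensor entries, this shows the $k$-DWL neighbor multisets of $\pi(\vt)$ and $\pi(\vt')$ coincide under $\beta$; feeding both \textsc{HASH} arguments through closes the induction.

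I expect the inductive step to be the main obstacle, and specifically the bookkeeping that the Folklore-style \emph{joint} grouping by the replacement node $w$ is exactly what makes the reconstruction possible: because $(k+1)$-DWL bundles $\big(\hat c^{(j-1)}_t(\vr_1),\ldots,\hat c^{(j-1)}_t(\vr_{k+1}),\tA^{<t}_{w,v_1,:},\ldots,\tA^{<t}_{w,v_{k+1},:}\big)$ for a common $w$ before hashing, I can read off the corresponding joint $k$-DWL tuple for the \emph{same} $w$, which a coordinate-by-coordinate (oblivious) aggregation could not supply. Care is needed to verify that the surplus coordinate $v_{k+1}$, carried along in every term, never obstructs applying the hypothesis to the projections, and that a single matching bijection $\beta$ serves all $k$ replacement slots simultaneously.

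Finally I would convert the refinement lemma into the claimed statement. Since each $k$-tuple is the projection of exactly $n$ distinct $(k+1)$-tuples, and the lemma shows $\hat c^{(j)}_t$ determines $c^{(j)}_t \circ \pi$, summing over all $(k+1)$-tuples gives that the round-$j$ multiset of $k$-colors equals, up to the uniform multiplicity $n$, the pushforward under $\pi$ of the round-$j$ multiset of $(k+1)$-colors. Hence equal $(k+1)$-color multisets at every round would force equal $k$-color multisets at every round; contrapositively, if $k$-DWL separates $\mathcal{DG}$ and $\mathcal{DG}'$ at some round, then $(k+1)$-DWL separates them at the same round, which is exactly the assertion of the proposition.
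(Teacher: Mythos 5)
Your overall strategy coincides with the paper's proof: the same refinement lemma (equal $(k+1)$-DWL colors imply equal $k$-DWL colors of the projections onto the first $k$ coordinates), proved by the same induction on the iteration, followed by the same transfer of multiset equality through the projection. Your closing counting argument (uniform multiplicity $n=|\gV|=|\gV'|$, after disposing of the unequal-size case) is if anything stated more carefully than the paper's final step.

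The one genuine gap is the case $k=1$, which the paper handles as a separate case and your uniform treatment does not. Your inductive step assumes that the $k$-DWL update hashes the multiset $\oms \bm{\phi}^{(j-1)}_t(\cdot, w) \mid w \in \gV \cms$ ranging over \emph{all} nodes $w$; that is the definition only for $k \ge 2$ (see \eqref{eq::k-DWL}). For $k=1$, the test in \eqref{eq::1-DWL def} aggregates only over the historical neighbors $(w,\cdot) \in \gN(v_1,t)$, so the data your bijection $\beta$ matches --- the all-nodes multiset $\oms (c^{(j-1)}_t(w), \tA^{<t}_{w,v_1,:}) \mid w \in \gV \cms$ --- is not literally the second \textsc{HASH} argument of the 1-DWL update, and ``feeding both \textsc{HASH} arguments through'' does not close the induction at $k=1$. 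The repair is exactly the observation the paper makes in its $k=1$ branch: a node $w$ is a historical neighbor of $v_1$ at time $t$ precisely when $\tA^{<t}_{w,v_1,:}$ is not the all-$\infty$ vector, a property visible in the matched HDAT components; hence $\beta$ restricts to a bijection between historical neighborhoods, and equality of the all-nodes multisets implies equality of the neighbor-restricted multisets that 1-DWL actually hashes. With that case distinction added, your argument is complete and is essentially identical to the paper's.
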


\begin{proof}
Let $\vs_{k} \in [\gV]^k$ and $\vs'_{k} \in [\gV']^k$ be the $k$-node tuples on $\mathcal{DG}$ and $\mathcal{DG}'$, respectively. We use $c^{(i)}_{k}(\vs_k)$ to denote the color of $\vs_k$ at the $i$-th iteration of $k$-DWL test. 

Suppose after $j$ iterations, $k$-DWL test determines $\mathcal{DG}$ and $\mathcal{DG}'$ are non-isomorphic, but $(k+1)$-DWL test determines $\mathcal{DG}$ and $\mathcal{DG}'$ are isomorphic. It follows that from iteration $i=0,1,...,j$, $ \oms c^{(i)}_{k+1}(\vs_{k+1}) | \vs_{k+1} \in [\gV]^{k+1} \cms = \oms c^{(i)}_{k+1}(\vs'_{k+1}) | \vs_{k+1}' \in [\gV']^{k+1} \cms $. Let $\vs_{k+1}=(v_1,...,v_k, v_{k+1})$ and $\vs'_{k+1}=(v'_1,...,v'_k, v'_{k+1})$. We will show that for $i=0,...,j$, $c^{(i)}_{k+1}(\vs_{k+1}) = c^{(i)}_{k+1}(\vs'_{k+1}) \Longrightarrow c^{(i)}_k((v_1,...,v_k)) = c^{(i)}_k((v'_1,...,v_k'))$. We prove this by induction on the iteration $i$. 

[Base Case]: For $i=0$, $c^{(0)}_{k+1}(\vs_{k+1}) = c^{(0)}_{k+1}(\vs'_{k+1}) \Longrightarrow c^{(0)}_k((v_1,...,v_k)) = c^{(0)}_k((v'_1,...,v_k'))$ immediately holds because the initial labeling function of $k$ and $(k+1)$-DWL tests are constant. 

[Inductive Step]: Suppose $c^{(i)}_{k+1}(\vs_{k+1}) = c^{(i)}_{k+1}(\vs'_{k+1}) \Longrightarrow c^{(i)}_k((v_1,...,v_k)) = c^{(i)}_k((v'_1,...,v_k'))$ holds for iteration $i$. Then, for iteration $i+1$, we discuss by cases: 

\begin{itemize}
    \item $k=1$. Based on Eq. (\ref{eq::k-DWL}), $c^{(i+1)}_{k+1}(\vs_{k+1}) = c^{(i+1)}_{k+1}(\vs'_{k+1})$ implies: 1) $c^{(i)}_{k+1}(\vs_{k+1}) = c^{(i)}_{k+1}(\vs'_{k+1})$. Based on the induction assumption, this implies: 
    \begin{equation}
    \label{eq::proof_dwl_hierachy_eq_1}
        c^{(i)}_k(v_1) = c^{(i)}_k(v_1') 
    \end{equation}
     2) $ \oms \bm{\phi}_t^{(i-1)}(\vs_{k+1}, w) | w \in \gV \cms = \oms \bm{\phi}_t^{(i-1)}(\vs'_{k+1}, w') | w' \in \gV' \cms $, which implies that: 
    \begin{equation}
        \oms (c^{(i-1)}_{k+1}(w, v_1), \tA_{w, v_1, :}) | w \in \gV \cms = \oms (c^{(i-1)}_{k+1}(w', v'_1), \tA_{w', v'_1, :}) | w' \in \gV' \cms
    \end{equation}
    Based on the induction assumption, this implies: 
    \begin{equation}
        \oms (c^{(i-1)}_k(w), \tA_{w, v_1, :}) | w \in \gV \cms = \oms (c^{(i-1)}_k(w', v'_1), \tA_{w', v'_1, :}) | w' \in \gV' \cms
    \end{equation}
    This implies: 
    \begin{equation}
    \label{eq::proof_dwl_hierachy_eq_2}
        \oms (c^{(i-1)}_k(w), \tA_{w, v_1, :}) | w \in \gN(v_1) \cms = \oms (c^{(i-1)}_k(w', v'_1), \tA_{w', v'_1, :}) | w' \in \gN(v_1') \cms
    \end{equation}
    where $\gN(v_1) = \oms w | w \in \mathcal{V}, \tA_{w, v_1, :} \neq [\infty] \cms$ indicates the nodes that have interactions with $v_1$. Then based on Eq. (\ref{eq::1-DWL def}), combining Eq. (\ref{eq::proof_dwl_hierachy_eq_1}) and Eq. (\ref{eq::proof_dwl_hierachy_eq_2}) yields $c^{(i+1)}_k(v_1) = c^{(i+1)}_k(v_1')$

    \item $k > 1$. Based on Eq. (\ref{eq::k-DWL}), $c^{(i+1)}_{k+1}(\vs_{k+1}) = c^{(i+1)}_{k+1}(\vs'_{k+1})$ implies: 1) $c^{(i)}_{k+1}(\vs_{k+1}) = c^{(i)}_{k+1}(\vs'_{k+1})$. Based on the induction assumption, this implies: 
    \begin{equation}
    \label{eq::proof_dwl_hierachy_eq_3}
        c^{(i)}_k((v_1, ..., v_k)) = c^{(i)}_k((v_1', ...,v_k')) 
    \end{equation}
     2) $ \oms \bm{\phi}_t^{(i-1)}(\vs_{k+1}, w) | w \in \gV \cms = \oms \bm{\phi}_t^{(i-1)}(\vs'_{k+1}, w') | w' \in \gV' \cms $, which implies that: 
    \begin{equation}
    \begin{split}
        &\oms (c^{(i-1)}_{k+1}((w, v_2, ..., v_{k+1})),...,c^{(i-1)}_{k+1}(( v_1, ..., v_{k}, w)), \tA_{w, v_1, :}, ..., \tA_{w, v_{k+1}, :})  | w \in \gV \cms  \\
       = &\oms (c^{(i-1)}_{k+1}((w', v_2', ..., v_{k+1}')),...,c^{(i-1)}_{k+1}(( v_1', ..., v_{k}', w')), \tA_{w', v_1', :}, ..., \tA_{w', v_{k+1}', :})  | \\ &w' \in \gV '\cms  
    \end{split}
    \end{equation}
    Based on the induction assumption, this implies: 
    \begin{equation}
    \begin{split}
    \label{eq::proof_dwl_hierachy_eq_4}
        &\oms (c^{(i-1)}_{k}(w, v_2 ..., v_k), c^{(i-1)}_{k}(v_1, w ..., v_k), ..., c^{(i-1)}_{k}(v_1, ..., v_{k-1}, w), \\
        &\tA_{w, v_1, :}, ..., \tA_{w, v_{k}, :}) | w \in \gV \cms  \\
       =   &\oms (c^{(i-1)}_{k}(w', v_2' ..., v_k'), c^{(i-1)}_{k}(v_1', w' ..., v_k'), ..., c^{(i-1)}_{k}(v_1', ..., v_{k-1}', w'), \\ &\tA_{w', v_1', :}, ..., \tA_{w', v_{k}', :}) | 
        w' \in \gV' \cms  \\
    \end{split}
    \end{equation}
    Then based on Eq.(\ref{eq::k-DWL}), combining Eq.(\ref{eq::proof_dwl_hierachy_eq_3}) and Eq.(\ref{eq::proof_dwl_hierachy_eq_4}), yields $c^{(i)}_k((v_1,...,v_k)) = c^{(i)}_k((v'_1,...,v_k'))$
\end{itemize}

Combining the base case and inductive step, it holds that for $i=0,...,j$, $c^{(i)}_{k+1}(\vs_{k+1}) = c^{(i)}_{k+1}(\vs'_{k+1}) \Longrightarrow c^{(i)}_k((v_1,...,v_k)) = c^{(i)}_k((v'_1,...,v_k'))$. We use $g$ to denote the mapping $(v_1, ..., v_k) = g(\vs_{k+1})$.. Then, from iteration $i=0,...,j$, it holds
\begin{equation}
\begin{split}
        &\oms c^{(i)}_{k+1}(\vs_{k+1}) | \vs_{k+1} \in [\gV]^{k+1} \cms = \oms c^{(i)}_{k+1}(\vs'_{k+1}) | \vs'_{k+1} \in [\gV']^{k+1} \cms \\
      \Longrightarrow  & \oms c^{(i)}_{k}(g(\vs_{k+1})) | \vs_{k+1} \in [\gV]^{k+1} \cms = \oms c^{(i)}_{k+1}(g(\vs'_{k+1})) | \vs'_{k+1} \in [\gV']^{k+1} \cms \\
      \Longrightarrow  & \oms c^{(i)}_{k+1}(g(\vs_{k})) | \vs_{k} \in [\gV]^k \cms = \oms c^{(i)}_{k}(g(\vs'_{k})) | \vs'_{k} \in [\gV']^k \cms 
\end{split}
\end{equation}
This indicates that $k$-DWL test concludes that $\mathcal{DG}$ and $\mathcal{DG}'$ are isomorphic. This causes the contradiction. Thus, the proposition is proved.

\end{proof}

\subsection{Proof of Proposition \ref{proposition: 1-DWL}}
We restate Proposition \ref{proposition: 1-DWL} as follows.  
\begin{proposition}
Let $\mathcal{DG} = \{\gV, \gE\}$ and $\mathcal{DG}' = \{\gV', \gE'\}$ be two dynamic graphs, and $\mX$ and $\mX'$ be their corresponding node features. Given a node labeling function $l: \gV \rightarrow \sN$ satisfying $l(u) = l(v)$ if and only if $\mX_u = \mX'_v$ for any $u \in \gV$ and $v \in \gV'$. Let $c_t^{(j)}$ denotes the color at time $t$ obtained by 1-DWL test initialized with label function $l$ in the $j$-th iteration, and $\vh_t^{(j)}$ be the temporal node embeddings outputted by the DyGNN. Then for all $j \geq 0$, $c_t^{(j)}(u) = c_t^{(j)}(v)$ $\Longrightarrow$ $\vh_t^{(j)}(u) = \vh_t^{(j)}(v)$. 
\end{proposition}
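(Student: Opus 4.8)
The plan is to proceed by induction on the iteration index $j$, proving the implication $c_t^{(j)}(u) = c_t^{(j)}(v) \Rightarrow \vh_t^{(j)}(u) = \vh_t^{(j)}(v)$ directly, and crucially to prove it \emph{simultaneously for every time $t$} (treating $t$ as universally quantified at each level of the induction). For the base case $j=0$, the colors are the initial labels $c_t^{(0)}(u) = l(u)$ and the embeddings are the raw features $\vh_t^{(0)}(u) = \mX_u$; the assumed property of $l$ (that $l(u)=l(v)$ iff $\mX_u = \mX_v$) immediately converts color equality into feature equality.

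For the inductive step I would exploit that \textsc{HASH} is injective (the standard perfect-hashing assumption on WL refinement), so $c_t^{(j)}(u) = c_t^{(j)}(v)$ forces equality of both arguments of \textsc{HASH}: first $c_t^{(j-1)}(u) = c_t^{(j-1)}(v)$, and second the neighbor-multiset equality $\oms (c_t^{(j-1)}(w), \tA^{<t}_{u,w,:}) \mid (w,\cdot)\in\gN(u,t) \cms = \oms (c_t^{(j-1)}(w'), \tA^{<t}_{v,w',:}) \mid (w',\cdot)\in\gN(v,t) \cms$. The latter yields a bijection $\pi$ between the nodes interacting with $u$ before $t$ and those interacting with $v$ before $t$ that preserves both the neighbor color at $t$ and the \emph{entire} timestamp sequence $\tA^{<t}$. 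Because the full sequences coincide, $\pi$ refines to a timestamp-preserving bijection on the per-event historical neighborhoods $\gN(u,t)$ and $\gN(v,t)$ that drive the DyGNN aggregation, matching each event $(w,t')$ of $u$ with an event $(\pi(w),t')$ of $v$ sharing the same $t'$; hence the time encodings $\sigma(t-t')$ agree termwise. Once the neighbor embeddings $\vh_{t'}^{(j-1)}(w)$ and $\vh_{t'}^{(j-1)}(\pi(w))$ are shown equal, the \textsc{AGG} inputs coincide so $\tilde{\vh}_t^{(j)}(u) = \tilde{\vh}_t^{(j)}(v)$, and combining this with $\vh_t^{(j-1)}(u) = \vh_t^{(j-1)}(v)$ (from the induction hypothesis applied to $c_t^{(j-1)}(u) = c_t^{(j-1)}(v)$) the \textsc{UPDATE} function returns $\vh_t^{(j)}(u) = \vh_t^{(j)}(v)$.

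The main obstacle is exactly this matching of neighbor embeddings: the DyGNN evaluates each neighbor at its \emph{historical} interaction time $t'$ (via $\vh_{t'}^{(j-1)}$), whereas the argument so far only supplies equality of neighbor colors at the \emph{current} time $t$. To bridge this gap I would insert an auxiliary monotonicity lemma: for every $t' \le t$ and every $j$, the coloring $c_t^{(j)}$ refines $c_{t'}^{(j)}$, i.e. $c_t^{(j)}(a)=c_t^{(j)}(b) \Rightarrow c_{t'}^{(j)}(a)=c_{t'}^{(j)}(b)$. This lemma is proved by a separate induction on $j$ and leans on the fact that $\tA^{<t'}_{\cdot,\cdot,:}$ is obtained from $\tA^{<t}_{\cdot,\cdot,:}$ by truncating to timestamps below $t'$: the matching data at time $t$ therefore restricts cleanly to matching data at $t'$ (fewer interacting nodes, truncated-but-equal sequences, and, via the lemma's own hypothesis, equal neighbor colors at $t'$). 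With the lemma in hand, color equality at $t$ gives color equality at $t'$, and the outer induction hypothesis applied at time $t'$ delivers $\vh_{t'}^{(j-1)}(w) = \vh_{t'}^{(j-1)}(\pi(w))$, closing the argument. I note that no injectivity of \textsc{AGG}, \textsc{UPDATE}, or $\sigma$ is required here, only that they are deterministic functions, which is precisely what one expects for an upper-bound (rather than an equivalence) result.
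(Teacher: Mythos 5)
Your proof is correct, and its skeleton is the same as the paper's: induction on $j$, base case from the consistency assumption on $l$, and an inductive step that uses injectivity of \textsc{HASH} to split color equality into previous-iteration color equality plus the neighbor multiset equality, after which the induction hypothesis is applied to matched neighbors. The genuine difference is that you identify and repair a gap that the paper's own proof glosses over. The paper derives the multiset equality $\oms (\vh_t^{(j)}(w), t-t') \mid (w,t') \in \gN(u,t) \cms = \oms (\vh_t^{(j)}(r), t-t') \mid (r,t') \in \gN(v,t) \cms$, i.e.\ equality of neighbor embeddings evaluated at the \emph{current} time $t$, and then immediately invokes the DyGNN update --- even though Eq.~(\ref{eq::DyGNN-def}) aggregates $\vh_{t'}^{(j)}(w)$, the neighbor embedding at the \emph{historical} interaction time $t'$. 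Your auxiliary monotonicity lemma (for $t' \le t$, the coloring $c_t^{(j)}$ refines $c_{t'}^{(j)}$, proved by its own induction using the truncation relation between $\tA^{<t}$ and $\tA^{<t'}$), together with strengthening the outer induction hypothesis to hold simultaneously for all times, is exactly what is needed to convert neighbor color equality at $t$ into embedding equality at each matched historical time $t'$, and so to legitimately feed the timestamp-preserving bijection into \textsc{AGG} and \textsc{UPDATE}. The paper's argument buys brevity at the cost of this unjustified step; your version is longer but airtight, and your closing observation --- that only determinism, not injectivity, of \textsc{AGG}, \textsc{UPDATE}, and $\sigma$ is needed for this upper-bound direction --- matches the paper's (implicit) usage.
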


\begin{proof}
We prove this proposition by induction on the iteration $j$. 

[Base Case]: For $j=0$, we have: 
\begin{equation}
    c_t^{(0)}(u) = c_t^{(0)}(v) \stackrel{(a)}{\Longrightarrow} l(u)=l(v) \stackrel{(b)}{\Longrightarrow} \mX_u = \mX'_{v} \Longrightarrow \vh_t^{(0)}(u) = \vh_t^{(0)}(u)
\end{equation}
where $(a)$ is because 1-DWL test is initialized with $l$. $(b)$ is due to the consistency assumption of $l$. 

[Inductive Step]:  Suppose $c^{(j)}_t(u)=c^{(j)}_t(v) \Longrightarrow \vh_t^{(j)}(u) = \vh_t^{(j)}(v) $ holds for iteration $j$. Then, based on Eq. (\ref{eq::1-DWL def}), at iteration $j+1$, $c^{(j+1)}_t(u)=c^{(j+1)}_t(v)$ implies: 1) $c^{(j)}_t(u)=c^{(j)}_t(v)$. 2) $\oms (c_t^{(j)}(w), \tA_{w,u,:}^{<t}) | (w,t') \in \mathcal{N}(u,t) \cms = \oms (c_t^{(j)}(r), \tA_{r,v,:}^{<t}) | (r,t') \in \mathcal{N}(v,t) \cms $. Then we have: 
\begin{equation}
\label{eq::proof1-res1}
    c^{(j)}_t(u)=c^{(j)}_t(v) \Longrightarrow \vh_t^{(j)}(u) = \vh_t^{(j)}(v)
\end{equation}
due to the inductive assumption. In addition, 
\begin{equation}
\label{eq::proof1-res2}
\begin{split}
        &\oms (c_t^{(j)}(w), \tA_{w,u,:}^{<t}) | (w,t') \in \mathcal{N}(u,t) \cms = \oms (c_t^{(j)}(r), \tA_{r,v,:}^{<t}) | (r,t') \in \mathcal{N}(v,t) \cms \\
      \Longrightarrow  &\oms (\vh_t^{(j)}(w), \tA_{w,u,:}^{<t}) | (w,t') \in \mathcal{N}(u,t) \cms = \oms (\vh_t^{(j)}(r), \tA_{r,v,:}^{<t}) | (r,t') \in \mathcal{N}(v,t) \cms \\
        \Longrightarrow  &\oms (\vh_t^{(j)}(w), t-t') | (w,t') \in \mathcal{N}(u,t) \cms = \oms (\vh_t^{(j)}(r), t-t') | (r,t') \in \mathcal{N}(v,t) \cms \\
\end{split}
\end{equation}
where the last equation holds because the entire interaction sequence being the same implies that each interaction time is the same. Combining Eq. (\ref{eq::proof1-res1}) and Eq. (\ref{eq::proof1-res2}), and based on Eq. (\ref{eq::DyGNN-def}), we have  $\vh_t^{(j+1)}(u) = \vh_t^{(j+1)}(v)$, which concludes the proof. 
\end{proof}

\subsection{Proof of Proposition \ref{proposition: bite}}
\begin{figure}[t]
    \centering
    \includegraphics[width=0.7\textwidth]{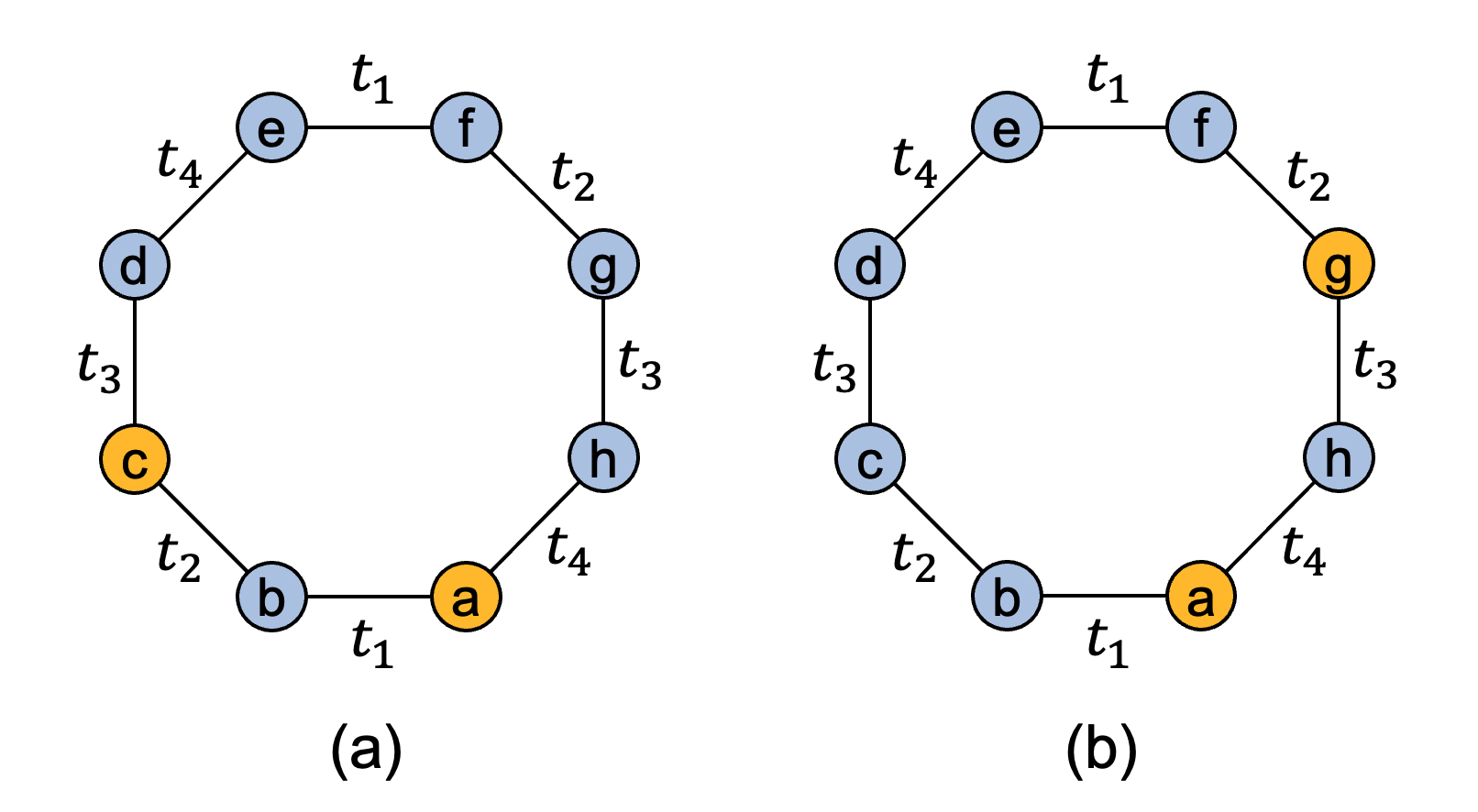}
    \caption{An example of Proposition \ref{proposition: bite}. Suppose the raw node feature are same for all nodes, and the current time is $t_5$. The model is required to distinguish two node pairs $(a,c)$ in (a) and $(a,g)$ in (b) at time $t_5$. }
    \label{fig::proof2-example}
\end{figure}

We restate Proposition \ref{proposition: bite} as follows. 
\begin{proposition}
There exists two dynamic graphs $\mathcal{DG}=\{\gV, \gE\}$ and $\mathcal{DG}'=\{\gV', \gE'\}$ which have non-isomorphic node pairs $\vs \in [\gV]^2$ and $\vs' \in [\gV']^2$ until some time $t$ that DyGNN with MITE can distinguish while vanilla DyGNN cannot. 
\end{proposition}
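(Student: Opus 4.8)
The plan is to prove this existence statement constructively, by exhibiting an explicit pair of dynamic graphs together with two node pairs and verifying three properties: that the pairs are non-isomorphic until the query time, that any vanilla DyGNN collapses them, and that augmenting with MITE separates them. I would adopt the construction sketched in Figure~\ref{fig::proof2-example}: two small dynamic graphs $\mathcal{DG}$ and $\mathcal{DG}'$ in which all raw node features are identical (so that $l$ is effectively constant), with target pairs $\vs=(a,c)\in[\gV]^2$ and $\vs'=(a,g)\in[\gV']^2$ queried at time $t_5$. The design principle, mirroring Figure~\ref{fig::intro_fig}, is to insert a node $b$ that interacts with \emph{both} $a$ and $c$ before $t_5$ in $\mathcal{DG}$, creating a joint (common-neighbor) dependency that is absent for $(a,g)$ in $\mathcal{DG}'$, while keeping the single-node neighborhoods of $c$ and $g$ symmetric.

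First I would verify non-isomorphism until $t_5$ directly from the HDAT-based definition: I would write out the relevant slices $\tA^{<t_5}_{\cdot,\cdot,:}$ and $\tA'^{<t_5}_{\cdot,\cdot,:}$ and argue that no bijection $\varphi:\gV\to\gV'$ sending $a\mapsto a$, $c\mapsto g$ can satisfy $\tA^{<t_5}_{i,j,:}=\tA'^{<t_5}_{\varphi(i),\varphi(j),:}$ for all pairs, because the common-neighbor interaction pattern through $b$ has no counterpart in $\mathcal{DG}'$. Next, to show the vanilla DyGNN fails, I would invoke Proposition~\ref{proposition: 1-DWL}, which bounds any vanilla DyGNN by the $1$-DWL test: it therefore suffices to show that $1$-DWL assigns the same color to $c$ and $g$ at $t_5$, i.e.\ that $c$ and $g$ have isomorphic historical neighborhoods \emph{as isolated nodes}. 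Combined with the trivial equality for the shared node $a$, Proposition~\ref{proposition: 1-DWL} then forces $\vh_{t_5}(a)=\vh_{t_5}(a)$ and $\vh_{t_5}(c)=\vh_{t_5}(g)$, so the merge step of the vanilla DyGNN produces identical representations for $(a,c)$ and $(a,g)$.

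Finally, to show the MITE-augmented DyGNN succeeds, I would compute the MITE feature of Eq.~(\ref{eq::bite}) for each pair. For $(a,c)$ the common neighbor $b$ contributes a slice $[\tB^{t_5}_{b,a,:} || \tB^{t_5}_{b,c,:}]$ whose two blocks are both finite (non-$\infty$), whereas in $\mathcal{DG}'$ no single node supplies finite interaction histories to both $a$ and $g$, so after applying $f$ the two multisets of neighbor encodings differ. Augmenting the node features with these MITE vectors breaks exactly the symmetry that made $c$ and $g$ indistinguishable, yielding distinct DyGNN embeddings for the two pairs. The main obstacle is the balancing act inside the construction: the timestamps must be chosen so that $c$ and $g$ stay $1$-DWL-equivalent as isolated nodes (otherwise a vanilla DyGNN would already separate the pairs and the example would be vacuous), while the \emph{joint} bi-interaction pattern that MITE records genuinely differs. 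I would discharge this by writing out the small HDAT and TIT tensors explicitly and checking both conditions by inspection.
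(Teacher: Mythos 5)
Your proposal is correct and follows the same overall strategy as the paper's proof: exhibit an explicit small counterexample, argue that the two target pairs are non-isomorphic at the query time, use node-level indistinguishability to force any vanilla DyGNN to collapse them, and use the MITE features of Eq.~(\ref{eq::bite}) to separate them. The substantive difference is the witness itself. The paper's example (Figure~\ref{fig::proof2-example}) places a common neighbor in \emph{both} graphs --- $b$ interacting with $(a,c)$ at times $t_1,t_2$, and $h$ interacting with $(a,g)$ at times $t_3,t_4$ --- so the two pairs have identical co-occurrence \emph{counts} and differ only in interaction \emph{timestamps}; MITE separates them because $[t_5-t_1\,||\,t_5-t_2]\neq[t_5-t_4\,||\,t_5-t_3]$. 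Your construction instead deletes the common neighbor from $\mathcal{DG}'$ altogether, so the separating signal is the presence versus absence of finite entries in the MITE blocks. Both witnesses prove the proposition as stated, and your plan is sound on the delicate point you correctly flag (keeping $c$ and $g$ 1-DWL-equivalent as isolated nodes); invoking Proposition~\ref{proposition: 1-DWL} to justify the vanilla-DyGNN collapse is, if anything, slightly more rigorous than the paper's direct appeal to ``isomorphic nodes get equal embeddings.'' What the paper's timing-based witness buys in addition is a separation of MITE from the coarser Neighbor Co-Occurrence Encoding: your presence/absence example would already be detected by interaction counts alone, whereas the paper's example is invisible to counting and genuinely exercises the timestamp information that MITE adds. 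Two small points to tighten in a final writeup: pair isomorphism here is set-based, so you must rule out both assignments $a\mapsto a,\ c\mapsto g$ and $a\mapsto g,\ c\mapsto a$, not just the first; and the concluding step should note that ``can distinguish'' means some admissible (e.g., injective) choice of \textsc{AGG}, \textsc{UPDATE} and $f$ realizes the separation, since a degenerate aggregator could still collapse the two differing multisets --- the paper's own proof is equally informal on this last point.
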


\begin{proof}
    We present a case of non-isomorphic node pairs that DyGNN with MITE can distinguish while vanilla DyGNN cannot in Fig. \ref{fig::proof2-example}. In Fig. \ref{fig::proof2-example}, suppose the current time is $t_5$. It can be seen that node pair $(a,c)$ in $\mathcal{DG}$ (a) is not isomorphic to the node pair $(a,g)$ in $\mathcal{DG}$ (b) until $t_5$, because the common neighbor $b$ interacts with $(a,c)$ at time $t_1$ and $t_2$ in $\mathcal{DG}$ (a), while the common neighbor $h$ interacts with $(a,g)$ at time $t_3$ and $t_4$ in $\mathcal{DG}$ (b). 

    For vanilla DyGNN, we denote the temporal node embeddings of $\mathcal{DG}$ (a) and (b) are $\vh$ and $\vq$, respectively. Then we have $\vh_{t_5}^{(l)}(a) = \vq_{t_5}^{(l)}(a)$ and $\vh_{t_5}^{(l)}(c) = \vq_{t_5}^{(l)}(g)$ for any $l \geq 0$, because the corresponding nodes are isomorphic. The node pair embedding of $(a,c)$ in $\mathcal{DG}$ (a) at $t_5$ is $[\vh_{t_5}^{(l)}(a) || \vh_{t_5}^{(l)}(c)]$, and the node pair embedding of $(a,g)$ in $\mathcal{DG}$ (b) at $t_5$ is $[\vq_{t_5}^{(l)}(a) || \vq_{t_5}^{(l)}(g)]$. Therefore, vanilla DyGNN cannot distinguish this two node pairs. 

    For DyGNN with MITE. we note that MITE of $b$ with respect to $(a,c)$ is $[t_5-t_1 || t_5-t_2]$ in $\mathcal{DG}$ (a), while MITE of $h$ with respect to $(a,g)$ is $[t_5-t_4 || t_5-t_3]$ in $\mathcal{DG}$ (b). Therefore, when MITE is concatenated with the raw node feature, aggregating neighbor information of node pair $(a,c)$ in $\mathcal{DG}$ (a) and node pair $(a,g)$ in $\mathcal{DG}$ (b) will yield $\vh_{t_5}^{(l)}(a) \neq \vq_{t_5}^{(l)}(a)$ and $\vh_{t_5}^{(l)}(c) \neq \vq_{t_5}^{(l)}(g)$. Therefore, DyGNN with MITE can distinguish this two node pairs. 
\end{proof}

\subsection{Proof of Proposition \ref{proposition:2DyGNN-bound}}

We restate Proposition \ref{proposition:2DyGNN-bound} as follows. 

\begin{proposition}
Let $\mathcal{DG} = \{\gV, \gE\}$ and $\mathcal{DG}' = \{\gV', \gE'\}$ be two dynamic graphs, and $\mX$ and $\mX'$ be their corresponding node features. Given a node labeling function $l: [\gV]^2 \rightarrow \sN$ satisfying $l((u,v)) = l((u',v'))$ if and only if $ [\mX_u || \mX_v] = [\mX'_{u'} || \mX'_{v'}]$ for all $(u,v) \in [\gV]^2$ and $(u',v') \in [\gV']^2$. Let $c_t^{(j)}$ denotes the color at time $t$ obtained by 2-DWL test , initialized with label function $l$ in the $j$-th iteration, and $\vh_t^{(j)}$ be the temporal node embeddings output by the Global \model. Then for all $j \geq 0$, $c_t^{(j)} \big ((u,v) \big) = c_t^{(j)} \big ((u',v') \big) \Longrightarrow \vh_t^{(j)} \big ((u,v) \big) = \vh_t^{(j)} \big ((u',v') \big)$.   
\end{proposition}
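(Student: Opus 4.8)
The plan is to prove the statement by induction on the iteration index $j$, mirroring exactly the argument used for the 1-DWL bound in Proposition~\ref{proposition: 1-DWL} but now carried out at the level of node pairs, with the $k=2$ instance of the refinement in Eq.~(\ref{eq::k-DWL}) playing the role of the coloring rule and Eq.~(\ref{eq::2-dgnn}) the role of the update rule. Writing $\vs=(u,v)$ and $\vs'=(u',v')$, I would establish that for every $j\geq 0$ and every pair of tuples, $c_t^{(j)}(\vs)=c_t^{(j)}(\vs')$ forces $\vh_t^{(j)}(\vs)=\vh_t^{(j)}(\vs')$. The base case is immediate: $c_t^{(0)}(\vs)=c_t^{(0)}(\vs')$ gives $l((u,v))=l((u',v'))$, hence $[\mX_u||\mX_v]=[\mX'_{u'}||\mX'_{v'}]$ by the consistency assumption on $l$, and since $\vh_t^{(0)}(\vs)=[\mX_u||\mX_v]$ the two initial embeddings coincide.

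For the inductive step I would assume the implication holds at iteration $j$ for all pairs simultaneously, and then unpack $c_t^{(j+1)}(\vs)=c_t^{(j+1)}(\vs')$. Because the \textsc{HASH} in Eq.~(\ref{eq::k-DWL}) is injective on its arguments, this equality yields two facts: (i) $c_t^{(j)}(\vs)=c_t^{(j)}(\vs')$, and (ii) the multiset equality $\oms \bm{\phi}_t^{(j)}(\vs,w) | w\in\gV\cms=\oms \bm{\phi}_t^{(j)}(\vs',w') | w'\in\gV'\cms$. Fact (i) gives $\vh_t^{(j)}(\vs)=\vh_t^{(j)}(\vs')$ directly from the inductive hypothesis, supplying the first argument of \textsc{UPDATE}. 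Fact (ii) provides a bijection $\pi:\gV\to\gV'$ matching each $w$ to some $w'$ so that the four components of $\bm{\phi}$ agree entrywise: $c_t^{(j)}((w,v))=c_t^{(j)}((w',v'))$, $c_t^{(j)}((u,w))=c_t^{(j)}((u',w'))$, $\tA^{<t}_{w,u,:}=\tA^{<t}_{w',u',:}$, and $\tA^{<t}_{w,v,:}=\tA^{<t}_{w',v',:}$.

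The crux of the argument is to convert fact (ii) into the statement that the \modell aggregation multisets $\bigoms \bm{\psi}_t(\vs,w) | w\in\gV \bigcms$ and $\bigoms \bm{\psi}_t(\vs',w') | w'\in\gV' \bigcms$ coincide under the same bijection $\pi$. For the learned-color part this follows from the inductive hypothesis applied to the matched tuple colors: equalities of $c_t^{(j)}((u,w))$ and of $c_t^{(j)}((v,w))$ yield equalities of $\vh_t^{(j)}((u,w))$ and $\vh_t^{(j)}((v,w))$, so the arguments of $f_1$ agree and $f_1([\vh_t^{(j)}((u,w))||\vh_t^{(j)}((v,w))])$ matches across $\pi$. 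For the MITE part I would observe that, at the fixed current time $t$, each Time Interval Tensor entry is a deterministic function of the corresponding HDAT entry (Eq.~(\ref{eq::tit-def}) subtracts each finite timestamp from $t$ and leaves $\infty$ otherwise), so the HDAT equalities in (ii) transfer directly to $\tB^t_{u,w,:}=\tB^t_{u',w',:}$ and $\tB^t_{v,w,:}=\tB^t_{v',w',:}$, whence $f_2([\tB^t_{u,w,:}||\tB^t_{v,w,:}])$ also matches. Concatenating, $\bm{\psi}_t(\vs,w)=\bm{\psi}_t(\vs',\pi(w))$ for every $w$, so the two $\bm{\psi}$-multisets are equal and \textsc{AGG} returns the same $\Tilde{\vh}_t^{(j+1)}$; feeding this together with fact (i) into \textsc{UPDATE} closes the induction.

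I expect the main obstacle to be the careful alignment between the \emph{ordered} tuples appearing in the $k=2$ refinement and those consumed by $\bm{\psi}$: Eq.~(\ref{eq::k-DWL}) exposes the colors of $(w,v)$ and $(u,w)$, whereas $\bm{\psi}$ uses the embeddings of $(u,w)$ and $(v,w)$, so the $(u,w)$ slot matches while the $(w,v)$ versus $(v,w)$ slots differ by tuple order. Closing this gap requires either appealing to a symmetric treatment of node-pair order, so that the color of $(w,v)$ controls the embedding of $(v,w)$, or arguing that both orderings are jointly pinned down by the multiset in (ii). A second, milder point is the identification $\tA_{w,u,:}=\tA_{u,w,:}$ for the undirected interaction sequences, needed so that the HDAT-to-TIT conversion lines up the index pattern of $\bm{\phi}$ with that of $\bm{\psi}$. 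Once this index bookkeeping is settled, every remaining step is a routine transfer through the inductive hypothesis.
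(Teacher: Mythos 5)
Your proposal is correct and follows essentially the same route as the paper's own proof: induction on $j$, a base case via the consistency assumption on $l$, then unpacking the hash equality into (i) the previous-iteration color equality and (ii) the multiset equality of $\bm{\phi}_t^{(j)}$, matching nodes through a bijection, transferring matched colors to matched embeddings via the inductive hypothesis, and converting HDAT equalities into TIT equalities so the MITE terms agree before feeding everything into \textsc{AGG} and \textsc{UPDATE}. The one subtlety you flag, that the refinement exposes the color of $(w,v)$ while $\bm{\psi}$ consumes the embedding of $(v,w)$, is real but is silently elided in the paper's proof (which simply writes the matched colors as $c_t^{(j)}((u,w))$ and $c_t^{(j)}((v,w))$); it can be closed, as you suggest, by an auxiliary induction showing the 2-DWL coloring respects tuple transposition under the symmetric initialization.
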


\begin{proof}
We prove this proposition by induction on the iteration $j$. 

\text{[Base Case]}: For $j = 0$, we have:
\begin{equation}
\label{eq::proof-3-eq-1}
\begin{split}
c_t^{(0)}((u, v)) = c_t^{(0)}((u', v')) &\stackrel{(a)} \Longrightarrow l((u, v)) = l((u', v')) \stackrel{(b)} \Longrightarrow [\mX_u || \mX_v] = [\mX'_{u'} || \mX'_{v'}] \\
&\Longrightarrow \vh_t^{(0)}((u, v)) = \vh_t^{(0)}((u', v'))
\end{split}
\end{equation}
where $(a)$ is because we assign $l$ as the initial coloring of 2-DWL and $(b)$ is due to the consistency assumption of $l$. 

\text{[Inductive Step]}: Suppose $c_t^{(j)}((u, v)) = c_t^{(j)}((u', v')) \Longrightarrow \vh_t^{(j)}((u, v)) = \vh_t^{(j)}((u', v'))$ holds for iteration $j$. Then, because of Eq. 
 (\ref{eq::k-DWL}), $c_t^{(j+1)}((u, v)) = c_t^{(j+1)}((u', v'))$ implies: 1) $c_t^{(j)}((u, v)) = c_t^{(j)}((u', v'))$, 2) $\{\!\{\bm{\phi}_t^{(j)}((u, v), w) | w \in \mathcal{V} \}\!\} = \{\!\{\bm{\phi_t^{(j)}}((u', v'), w') | w' \in \mathcal{V}' \}\!\}$. Based on the inductive assumption, we have
\begin{equation}
\label{equation:proof3-1}
    c_t^{(j)}((u, v)) = c_t^{(j)}((u', v')) \Longrightarrow \vh_t^{(j)}((u, v)) = \vh_t^{(j)}((u', v'))
\end{equation}
Also, there exists a mapping $f:\gV \rightarrow \gV'$ where $\bm{\phi}_t^{(j)}((u, v), w) = \bm{\phi}_t^{(j)}((u', v'), f(w))$. This means that $c_t^{(j)}((u, w)) = c_t^{(j)}((u', f(w))), (c_t^{(j)}((v, w)) = c_t^{(j)}((v', f(w))$ and $\tA_{w, u, :}^{<t} = \tA_{f(w), u', :}^{<t}, \tA_{w, v, :}^{<t} = \tA_{f(w), v', :}^{<t}$ holds for any $w \in \mathcal V$. This indicates the followings:
\begin{itemize}
    \item $\forall w \in \mathcal V, [\vh_t^{(j)}((u, w))~||~\vh_t^{(j)}((v, w))] = [\vh_t^{(j)}((u', f(w)))~||~\vh_t^{(j)}((v', f(w)))]$ by the inductive assumption on iteration $j$.
    \item $\forall w \in \mathcal V, [\tB_{u, w, :}^t~||~\tB_{v, w, :}^t] = [\tB_{u', f(w), :}^t~||~\tB_{v', f(w), :}^t]$. This is because all the interaction times between node pairs $(u, w)$ and $(u', f(w))$ before time $t$ are the same, they thus generate the same TITs, and the same holds for node pairs $(v, w)$ and $(v', f(w))$.
\end{itemize}
Combining the above facts and Eq. (\ref{equation:proof3-1}), and based on Eq. (\ref{eq::2-dgnn}), we have $\vh_t^{(j+1)}((u, v)) = \vh_t^{(j+1)}((u', v'))$, which concludes the proof.
\end{proof}

\subsection{Proof of Proposition \ref{proposition:2DyGNN-injective}}

We restate Proposition \ref{proposition:2DyGNN-injective} as follows. 
\begin{proposition}
Let $\gM: [\gV]^2 \rightarrow \mathbb{R}^d$ be a Global \model. Suppose the 2-DWL test is initialized with a node labeling function $l: [\gV]^2 \rightarrow \sN$ satisfying $l((u,v)) = l((u',v'))$ if and only if $ [\mX_u || \mX_v] = [\mX'_{u'} || \mX'_{v'}]$ for all $(u,v) \in [\gV]^2$ and $(u',v') \in [\gV']^2$. If the $\textsc{AGG}$, $\textsc{UPDATE}$, $f_1$ and $f_2$ of $\gM$ are injective, then at any time $t$, if 2-DWL test assigns different colors to two node pairs, $\gM$ will also output different temporal embeddings of these two node pairs. 
\end{proposition}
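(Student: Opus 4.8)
The plan is to prove the contrapositive of the statement: by induction on the iteration $j$ I will show that for all pairs, $\vh_t^{(j)}\big((u,v)\big) = \vh_t^{(j)}\big((u',v')\big) \Longrightarrow c_t^{(j)}\big((u,v)\big) = c_t^{(j)}\big((u',v')\big)$. This asserts that the partition induced by the embeddings is at least as fine as the one induced by the 2-DWL colours, which is exactly the claim that two pairs receiving different colours must receive different embeddings. The strategy is to run the recursion of Proposition \ref{proposition:2DyGNN-bound} backwards: every forward implication there is replaced by its inverse, and each such inversion is licensed by one of the injectivity hypotheses on $\textsc{UPDATE}$, $\textsc{AGG}$, $f_1$, and $f_2$.

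For the base case $j=0$, equality of $\vh_t^{(0)}\big((u,v)\big) = [\mX_u || \mX_v]$ with $[\mX'_{u'} || \mX'_{v'}]$ forces $l\big((u,v)\big) = l\big((u',v')\big)$ by the consistency assumption on $l$, hence the initial 2-DWL colours agree. In the inductive step I would first peel off the outer update: since $\vh_t^{(j+1)}\big((u,v)\big) = \textsc{UPDATE}\big(\vh_t^{(j)}((u,v)), \Tilde{\vh}_t^{(j+1)}((u,v))\big)$, injectivity of $\textsc{UPDATE}$ turns equality of the two outputs into equality of both arguments, i.e. $\vh_t^{(j)}\big((u,v)\big) = \vh_t^{(j)}\big((u',v')\big)$ and $\Tilde{\vh}_t^{(j+1)}\big((u,v)\big) = \Tilde{\vh}_t^{(j+1)}\big((u',v')\big)$. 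The first of these gives $c_t^{(j)}\big((u,v)\big) = c_t^{(j)}\big((u',v')\big)$ by the inductive hypothesis.

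Next I would invert the aggregation. Injectivity of $\textsc{AGG}$ (as a map on multisets) turns equality of the two $\Tilde{\vh}^{(j+1)}$ values into equality of the underlying multisets $\oms \bm{\psi}_t((u,v),w) \mid w \in \gV \cms = \oms \bm{\psi}_t((u',v'),w') \mid w' \in \gV' \cms$; equal cardinalities already force $|\gV| = |\gV'|$, and a matching supplies a bijection $\pi:\gV \to \gV'$ with $\bm{\psi}_t((u,v),w) = \bm{\psi}_t((u',v'),\pi(w))$ for every $w$. Because $\bm{\psi}$ is the concatenation of a fixed-dimension $f_1$-block and $f_2$-block, each such vector equality splits into equality of the two blocks, and injectivity of $f_1,f_2$ inverts them: $f_1$ yields $\vh_t^{(j)}((u,w)) = \vh_t^{(j)}((u',\pi(w)))$ and $\vh_t^{(j)}((v,w)) = \vh_t^{(j)}((v',\pi(w)))$, while $f_2$ yields $\tB_{u,w,:}^t = \tB_{u',\pi(w),:}^t$ and $\tB_{v,w,:}^t = \tB_{v',\pi(w),:}^t$. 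The inductive hypothesis converts the pair-embedding equalities into equalities of the 2-DWL colours of the replaced-coordinate tuples, and since at a fixed $t$ the map $\tA^{<t} \mapsto \tB^t$ is a bijection, the TIT equalities are equivalent to the corresponding HDAT equalities. Reassembling these per-$w$ facts through $\pi$ rebuilds precisely $\oms \bm{\phi}_t^{(j)}((u,v),w) \mid w \in \gV \cms = \oms \bm{\phi}_t^{(j)}((u',v'),w') \mid w' \in \gV' \cms$, which with $c_t^{(j)}\big((u,v)\big) = c_t^{(j)}\big((u',v')\big)$ and the 2-DWL update rule (Eq. \ref{eq::k-DWL}) gives $c_t^{(j+1)}\big((u,v)\big) = c_t^{(j+1)}\big((u',v')\big)$, closing the induction.

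The main obstacle is the bookkeeping that ties the single bijection $\pi$ from the multiset matching to all four components of $\bm{\phi}$ at once: the same $\pi$ must simultaneously witness equality of both replaced-coordinate colours and both interaction-time sequences, and I must check that the tuples $\bm{\psi}$ feeds into $f_1$, namely $(u,w)$ and $(v,w)$, line up with the tuples $\vr_1(\vs,w),\vr_2(\vs,w)$ used by $\bm{\phi}$, which relies on the symmetry of the dynamic adjacency tensor so that mirrored tuples carry matching colours and HDAT sequences. A secondary point to state carefully is that the inversion of $\textsc{AGG}$ requires injectivity as a function on multisets rather than merely on vectors; this is exactly the property that makes the passage from equal aggregates to equal multisets valid, and it is the hypothesis whose realizability is addressed by the universal-approximation remark following the proposition.
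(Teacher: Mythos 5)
Your overall strategy---proving the contrapositive $\vh_t^{(j)}\big((u,v)\big) = \vh_t^{(j)}\big((u',v')\big) \Rightarrow c_t^{(j)}\big((u,v)\big) = c_t^{(j)}\big((u',v')\big)$ by induction, inverting $\textsc{UPDATE}$, $\textsc{AGG}$, $f_1$, $f_2$ one layer at a time---is sound, and it is essentially the inverse-direction reading of the paper's argument: the paper instead builds, by induction, an injective map $\varphi$ with $\vh_t^{(j)} = \varphi(c_t^{(j)})$, absorbing all of the inversions into a single composition $f \circ \textsc{HASH}^{-1}$. Almost every step of your inversion is correct: peeling off $\textsc{UPDATE}$, extracting the bijection $\pi$ from multiset equality via injectivity of $\textsc{AGG}$ on multisets, splitting the fixed-dimension concatenations, applying the inductive hypothesis to the pairs $(u,w)$ and $(u',\pi(w))$, and converting $\tB^t$ back to $\tA^{<t}$ are all fine.

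The genuine gap is exactly at the point you flagged, and the justification you propose for it is wrong. From $f_1$ and the inductive hypothesis you obtain $c_t^{(j)}\big((u,w)\big) = c_t^{(j)}\big((u',\pi(w))\big)$ and $c_t^{(j)}\big((v,w)\big) = c_t^{(j)}\big((v',\pi(w))\big)$, i.e., colours of tuples with $w$ in the \emph{second} coordinate. But by Eq. (\ref{eq::k-DWL}), $\bm{\phi}_t^{(j)}\big((u,v),w\big)$ contains $c_t^{(j)}\big(\vr_1((u,v),w)\big) = c_t^{(j)}\big((w,v)\big)$, with $w$ in the \emph{first} coordinate. You bridge this by asserting that mirrored tuples carry matching colours, i.e., $c_t^{(j)}\big((v,w)\big) = c_t^{(j)}\big((w,v)\big)$. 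That is false under the stated initialization: $l((v,w)) = l((w,v))$ holds iff $[\mX_v || \mX_w] = [\mX_w || \mX_v]$, i.e., iff $\mX_v = \mX_w$, so mirrored tuples already receive different colours at iteration $0$ whenever the features differ; and even with constant features the refinement separates them, because $\bm{\phi}$ is an ordered tuple whose two replaced-coordinate colours exchange positions under mirroring. What is true, and what your proof actually needs, is the weaker \emph{swap lemma}: for all $j$, $c_t^{(j)}\big((a,b)\big) = c_t^{(j)}\big((a',b')\big) \Rightarrow c_t^{(j)}\big((b,a)\big) = c_t^{(j)}\big((b',a')\big)$, i.e., the colour of a tuple determines the colour of its mirror. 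This lemma holds and is proved by its own induction (using injectivity of $\textsc{HASH}$ to recover the first-argument and multiset equalities, the symmetry $\tA_{a,b,:} = \tA_{b,a,:}$, and the lemma at iteration $j-1$); with it, $c_t^{(j)}\big((v,w)\big) = c_t^{(j)}\big((v',\pi(w))\big)$ yields $c_t^{(j)}\big((w,v)\big) = c_t^{(j)}\big((\pi(w),v')\big)$, and your reassembly of the $\bm{\phi}$-multisets closes the induction. Until that lemma is stated and proved, your step from the per-$w$ equalities to $\oms \bm{\phi}_t^{(j)}((u,v),w) \,|\, w \in \gV \cms = \oms \bm{\phi}_t^{(j)}((u',v'),w') \,|\, w' \in \gV' \cms$ does not follow. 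For what it is worth, the paper's own proof silently skates over the same coordinate mismatch---it writes the hash argument using the $(u,w),(v,w)$ convention of Eq. (\ref{eq::2-dgnn}) rather than the $(w,v),(u,w)$ convention of Eq. (\ref{eq::k-DWL})---so supplying the swap lemma would be needed to make the published argument fully rigorous as well.
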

\begin{proof}
We will show that for all $j \geq 0$, there exists an injective function $\varphi$ where for all $(u, v) \in [\mathcal V]^2$, $\vh_t^{(j)}((u, v)) = \varphi(c_t^{(j)}((u, v)))$ holds. We prove this by induction on $j$. 

\text{[Base Case]:} When $j = 0$, considering the consistency assumption of node labeling function $l$ and combined node features, we have: 
\begin{equation}
\begin{split}
c_t^{(0)}((u, v)) \neq c_t^{(0)}((u', v')) &  \Longrightarrow l((u, v)) \neq l((u', v')) \Longrightarrow [\mX_u || \mX_v] \neq [\mX'_{u'} || \mX'_{v'}] \\
&\Longrightarrow \vh_t^{(0)}((u, v)) \neq \vh_t^{(0)}((u', v'))
\end{split}
\end{equation}
thus by the definition of injectiveness, there must exists an injective function $\varphi$ such that $\vh_t^{(0)}((u, v)) = \varphi(c_t^{(0)}((u, v)))$. 

\text{[Inductive Case]:} Suppose for iteration $j$, and suppose $\varphi$ is the injective function satisfying $\vh_t^{(j)}((u, v)) = \varphi(c_t^{(j)}((u, v)))$. Then for iteration $j+1$, we have
\begin{equation}
\begin{split}
\vh_t^{(j+1)}((u, v)) = \textsc{UPDATE}(\vh_t^{(j)}((u, v)), \textsc{AGG}(\{\!\{ [&f_1([\vh_t^{(j)}((u, w))~||~\vh_t^{(j)}((v, w))])~||~ \\
&f_2([\tB_{u, w, :}^t~||~\tB_{v, w, :}^t])] ~|~w\in \mathcal V\}\!\}) \\
= \textsc{UPDATE}(\varphi(c_t^{(j)}((u, v))), \textsc{AGG}(\{\!\{ [&f_1([\varphi(c_t^{(j)}((u, w)))~||~\varphi(c_t^{(j)}((v, w)))])~||~ \\
&f_2([\tB_{u, w, :}^t~||~\tB_{v, w, :}^t])] ~|~w\in \mathcal V\}\!\})
\end{split}
\end{equation}
Since the composition of injective functions is still injective, the above can be written as:
\begin{equation}
\vh_t^{(j+1)}((u, v)) = f(c_t^{(j)}((u, v)), \{\!\{(c_t^{(j)}((u, w)), c_t^{(j)}((v, w)), \tA_{u, w, :}^{<t}, \tA_{u, w, :}^{<t}) ~|~w\in \mathcal V\}\!\})
\end{equation}
Where $f$ is an injective function. The conversion from $\mathbf \tB$ to $\mathbf \tA$ is legal because of the bijective mapping from TIT to DAT (Eq. (\ref{eq::tit-def})). Therefore, we have
\begin{equation}
\begin{split}
\vh_t^{(j+1)}((u, v)) = f \circ \textsc{HASH}^{-1} \circ \textsc{HASH}(&c_t^{(j)}((u, v)), \\
&\{\!\{(c_t^{(j)}((u, w)), c_t^{(j)}((v, w)), \tA_{u, w, :}^{<t}, \tA_{u, w, :}^{<t}) ~|~w\in \mathcal V\}\!\}) \\
= f \circ \textsc{HASH}^{-1}(c_t^{(j+1)}((&u, v)))
\end{split}
\end{equation}
By above equation, we have found an injective function $\varphi' = f \circ \mathrm{HASH}^{-1}$ for the ($j+1$)-th iteration, thus the proposition is proven.
\end{proof}

\section{Details of experimental settings}

\subsection{Datasets}
\label{sec::appendix_datasets}
The details of datasets included in our experiments will be introduced in the followings. The statistics of these datasets are summarized in Table \ref{table:dataset_statistics}. 
\paragraph{Reddit.} Reddit\footnote{\url{https://snap.stanford.edu/jodie/reddit.csv}} is a dataset of user activities that includes subreddits posted by various users within a single month on the Reddit website.  It is a bipartite dataset that includes the 10,000 most active users and 984 subreddits, offering detailed interaction features.

\paragraph{Wikipedia.} Wikipedia\footnote{\url{https://snap.stanford.edu/jodie/wikipedia.csv}} captures the clicking actions on Wikipedia pages by various users. It is a bipartite network which includes clicking actions on 1,000 pages over the course of one month with detailed interaction features provided by the users.

\paragraph{UCI.}
UCI\footnote{\url{https://konect.cc/networks/opsahl-ucsocial/}} is a non-bipartite network that encompasses sent messages between users within an online community of students from the University of California, Irvine. The nodes represent the students, and the edges denote the messages exchanged among them.

\paragraph{Enron.}
Enron\footnote{\url{https://www.cs.cmu.edu/~enron/}} is a non-bipartite collection comprising around 0.5M emails exchanged among employees of the Enron energy company over a period of three years.

\paragraph{MOOC.}
MOOC\footnote{\url{https://snap.stanford.edu/jodie/mooc.csv}} is a bipartite interaction network of online sources, where the nodes represent students and course content units. Each link indicates a student's access to a specific content unit and is characterized by a 4-dimensional feature.

\paragraph{LastFM.} LastFM\footnote{\url{https://snap.stanford.edu/jodie/lastfm.csv}} is a bipartite dataset that contains information about which songs were listened to by which users over the course of one month. In this dataset, users and songs are represented as nodes, and the links indicate the users' listening behaviors.

\paragraph{CanParl.} Canparl\footnote{\url{https://github.com/shenyangHuang/LAD}} is a dynamic political network documenting the interactions between Canadian Members of Parliament (MPs) from 2006 to 2019. In this network, each node represents an MP from an electoral district, and a link is formed when two MPs both vote "yes" on a bill. The weight of each link indicates the number of times one MP voted "yes" alongside another MP within a year.

\begin{table}[!htbp]
\centering
\caption{Statistics of the datasets. Average Interaction Intensity $\lambda = 2|\mathcal{E}|/(|\mathcal{V}|\mathcal{T})$, where $|\mathcal{E}|$ and $|\mathcal{V}|$ denote the number of interactions and nodes, respectively. $\mathcal{T}$ denotes the total duration (seconds).}
\label{table:dataset_statistics}
\setlength{\tabcolsep}{0.8mm}{
\begin{tabular}{cccccccc}
\toprule
Datasets         & \#Nodes & \#Links   & \#Node Feat. & \#Edge Feat. & Bipartite & Duration   & $\lambda$   \\ 
\midrule
Wikipedia        & 9,227  & 157,474   & 0 & 172                & True      & 1 month  & $4.57\times10^{-5}$     \\
Reddit            & 10,984 & 672,447   & 0 & 172                & True      & 1 month   & $1.27\times10^{-5}$          \\
MOOC         & 7,144  & 411,749   & 0 & 4                  & True      & 17 months     & $2.62\times10^{-6}$       \\
LastFM       & 1,980  & 1,293,103 & 0 & 0                 & True      & 1 month  & $5.04\times10^{-4}$            \\
Enron             & 184    & 125,235   & 0 & 0                 & False     & 3 years  & $1.20\times10^{-5}$        \\
UCI               & 1,899  & 59,835    & 0 & 0                 & False     & 196 days   &  $3.79\times10^{-6}$          \\
CanParl      & 734    & 74,478    & 0 & 1                  & False     & 14 years     & $4.59\times10^{-7}$       \\
\bottomrule
\end{tabular}
}

\end{table}

\subsection{Implementation Details}
\label{sec::appendix_implementation_detail}
\paragraph{Baseline implementations.} We reproduce the experimental results of JODIE, DyRep, TGAT, TGN, CAWN, Graphmixer, TCL and DyGFormer based on the dynamic graph learning library DyGLib \footnote{\url{https://github.com/yule-BUAA/DyGLib}}. Specifically, we fix the optimizer, learning rate and batch size are set as 0.0001, 200 and Adam \citep{kingma2014adam}, respectively, for all baselines. All the baseline methods for 50 epochs using the early stopping strategies of patience of 10. The model achieving the best performance on validation set is selected for testing. All other hyperparameters settings of the specific models, such as dimensions of various encodings or the number of sampled neighbors follow the optimal configurations provided by DyGLib. We repeat the experiments three times with different seeds. For PINT, we report the results in their paper except MOOC and CanParl. For results on MOOC and CanParl, we run official code of PINT \footnote{\url{https://github.com/AaltoPML/}}. All hyper-parameters are set to their default values. 

\paragraph{\modell implementations.} We implement the \modell based on DyGLib. The optimizer, learning rate, batch size, number of epochs and early stopping strategies are set same as baseline methods. The number of preserved timestamps $K$ in MITE is set as 32. The dimension of MITE $d_B$ is set as 50. The dimension of time encoding $d_T$ is set as 100. The aligned dimension $d$ is set as 50. The number of Transformer layer is 2. The number of attention heads is 2. The maximum input neighbor length $|\mathcal{N}|$ and the patching numbers $P$ of datasets are summarized in Table \ref{table:implementation_detail}. All the experiments are conducted on a Linux Ubuntu 18.04 Server with a NVIDIA
RTX2080Ti GPU.

\begin{table}[!htbp]
\centering
\caption{Maximum input neighbor length $|\gN|$ and patching number $P$ of datasets..}
\label{table:implementation_detail}
\begin{tabular}{cccccccc}
\toprule
        &  Reddit &  Wikipedia  & UCI  &  Enron  &  LastFM  & MOOC  & CanParl   \\ 
\midrule
     $|\gN|$   & 64  & 32  & 32  & 256  &    128  &   256  & 2048 \\
     $P$   & 2  & 1  & 1  & 8     &  4  &  8   & 64      \\
\bottomrule
\end{tabular}

\end{table}

\section{Additional Experiments}
\label{sec::appendix_additional_experiments}

\subsection{The AUC results of link prediction.}
The AUC results of link prediction experiments are presented in Table \ref{table::link_prediction_auc}. We observe that the proposed \modell achieves the best performance on all seven datasets for both transductive and inductive settings. 

\begin{table}[htbp]
\centering
\caption{The AUC results of transductive/inductive link prediction are reported. The values are multiplied by 100. The results of the best and second best performing models are highlighted in \textbf{bold} and \underline{underlined}, respectively.}
\label{table::link_prediction_auc}
\resizebox{0.99\textwidth}{!}
{
\setlength{\tabcolsep}{0.9mm}
{
\begin{tabular}{ccccccccc}
\toprule[1.0pt]
                               & Model     & Reddit        & Wikipedia            & UCI        & LastFM      & Enron          & MOOC & CanParl              \\
\midrule
\multirow{8}{*}{\rotatebox{90}{Transductive}} 
           & JODIE        &  98.20$\pm$0.05  & 95.36$\pm$0.12  & 90.15$\pm$0.07  & 71.95$\pm$1.82  & 84.68$\pm$2.97  &  82.80$\pm$0.83 & 78.15$\pm$0.21  \\
           & DyRep        &  98.02$\pm$0.15  & 93.60$\pm$0.09  & 65.19$\pm$4.26  & 71.06$\pm$0.21  & 81.87$\pm$1.76  & 83.30$\pm$0.82  & 78.19$\pm$3.07 \\
& TGAT & 98.13$\pm$0.02 & 96.46$\pm$0.15 & 78.30$\pm$0.48 & 71.25$\pm$0.26 & 66.13$\pm$1.66 & 85.37$\pm$0.48 & 75.61$\pm$3.87 \\
& TGN & 98.59$\pm$0.02 & 98.02$\pm$0.09 & 89.83$\pm$0.11 & 78.26$\pm$2.11 & 87.53$\pm$2.25 & \underline{89.64$\pm$0.51} & 74.11$\pm$0.08 \\
& CAWN & 99.02$\pm$0.01 & 98.55$\pm$0.02 & 93.55$\pm$0.02 & 87.12$\pm$0.01 & 89.27$\pm$0.21 & 79.95$\pm$0.16 & 77.07$\pm$1.74 \\
& GraphMixer & 96.77$\pm$0.00 & 96.36$\pm$0.04 & 91.25$\pm$0.97 & 73.69$\pm$0.11 & 84.57$\pm$0.11 & 83.35$\pm$0.17 & 83.64$\pm$0.14 \\
& TCL & 96.88$\pm$0.02 & 95.53$\pm$0.23 & 84.18$\pm$0.39 & 70.24$\pm$2.21 & 73.92$\pm$0.28 & 82.54$\pm$0.07 & 73.43$\pm$0.93 \\
& DyGFormer & \underline{99.15$\pm$0.01} & \underline{98.84$\pm$0.00} & \underline{93.98$\pm$0.20} & \underline{91.38$\pm$0.03} & \underline{93.08$\pm$0.05} & 85.71$\pm$0.43 & \underline{97.71$\pm$0.29} \\
& \modell & \textbf{99.27$\pm$0.00} & \textbf{99.17$\pm$0.02} & \textbf{96.51$\pm$0.08} & \textbf{92.92$\pm$0.01} & \textbf{93.59$\pm$0.04} & \textbf{90.93$\pm$0.25} & \textbf{98.52$\pm$0.59} \\

           \cmidrule{2-9}
            & Improve(\%)      &  0.12  &  0.33  &  2.53  & 1.54  &  0.51  & 1.29  & 0.81 \\
\bottomrule[1.0pt]
\toprule[1.0pt]
                               & Model     & Reddit        & Wikipedia            & UCI        & LastFM      & Enron          & MOOC & CanParl              \\
\midrule
\multirow{8}{*}{\rotatebox{90}{Inductive}} 
& JODIE & 96.40$\pm$0.13 & 92.97$\pm$0.21 & 77.14$\pm$0.95 & 81.31$\pm$0.89 & 81.00$\pm$2.55 & 83.64$\pm$1.03 & 53.64$\pm$2.34 \\
& DyRep & 95.85$\pm$0.31 & 90.76$\pm$0.05 & 56.23$\pm$1.18 & 82.41$\pm$0.08 & 72.54$\pm$3.60 & 82.78$\pm$1.72 & 55.16$\pm$1.63 \\
& TGAT & 96.52$\pm$0.13 & 95.77$\pm$0.09 & 77.16$\pm$0.15 & 76.64$\pm$0.22 & 59.32$\pm$0.23 & 84.91$\pm$0.71 & 55.59$\pm$1.17 \\
& TGN & 97.26$\pm$0.10 & 97.40$\pm$0.04 & 82.36$\pm$1.27 & 85.59$\pm$1.13 & 79.46$\pm$4.86 & \underline{89.15$\pm$1.29} & 55.53$\pm$4.16 \\
& CAWN & 98.45$\pm$0.04 & 98.03$\pm$0.01 & 89.77$\pm$0.06 & 89.87$\pm$0.01 & 85.49$\pm$0.08 & 81.45$\pm$0.25 & 58.52$\pm$0.60 \\
& GraphMixer & 94.91$\pm$0.03 & 95.83$\pm$0.05 & 88.90$\pm$0.73 & 80.55$\pm$0.11 & 76.76$\pm$0.08 & 81.96$\pm$0.26 & 58.91$\pm$0.59 \\
& TCL & 93.86$\pm$0.38 & 95.51$\pm$0.12 & 80.35$\pm$0.71 & 76.27$\pm$1.98 & 70.38$\pm$0.55 & 80.93$\pm$0.04 & 56.10$\pm$0.05 \\
& DyGFormer & \underline{98.68$\pm$0.00} & \underline{98.42$\pm$0.02} & \underline{91.49$\pm$0.19} & \underline{92.95$\pm$0.03} & \underline{90.32$\pm$0.28} & 85.60$\pm$0.43 & \underline{88.99$\pm$0.14} \\
& \modell & \textbf{98.88$\pm$0.01} & \textbf{98.82$\pm$0.00} & \textbf{94.12$\pm$0.03} & \textbf{94.33$\pm$0.01} & \textbf{90.51$\pm$0.15} & \textbf{91.11$\pm$0.36} & \textbf{89.16$\pm$0.73}  \\
           \cmidrule{2-9}
            & Improve(\%)      &  0.20  & 0.40   &  2.63  &  1.38 &  0.19  &  1.96  & 0.17 \\
\bottomrule[1.0pt]
\end{tabular}
}
}
\end{table}

\subsection{Node Classification}
\label{sec::node_classification}
Following the settings of \citet{rossi2020temporal}, we also compare the node classification performance of proposed \modell and other baselines. In particular, the weights of DyGNN's encoder are pre-trained based on link prediction task. Then, a two-layer MLP is added on top of the encoder for classification. Two datasets with node labels (Reddit and Wikipedia) are adopted for evaluation. Note that the representations obtained by \modell are node-pair level, thus we make some modifications to incorporate the node classification experiments. Specifically, given a target node pair $(u,v)$ at time $t$, we modify Eq. (\ref{eq::mean_pooling}) as: 
\begin{equation}
\begin{split}
     \vh_t(u) &= \text{MEAN}(\mH^{(L)}_{1:|\gN(u,t)|}) \mW_{out} + \vb_{out} \\
     \vh_t(v) &= \text{MEAN}(\mH^{(L)}_{|\gN(u,t)|+1:|\gN(v,t)|}) \mW_{out} + \vb_{out}
\end{split}
\end{equation}
to generate the representation of $(u,t)$ and $(v,t)$, respectively. The AUC results are presented in Table \ref{table::node_classification}. We observe that \modell achieves the highest average rankings compared to other baselines. In addition, The \modell achieved the best performance on the Reddit and significantly outperformed other baselines.

\begin{table}[htbp]
\centering
\caption{AUC results of node classification (multiplied by 100). The values of the best performing models are marked in \textbf{bold}. The average ranks are included.  }
\label{table::node_classification}
\begin{tabular}{cccc}
\toprule
           & Wikipedia              & Reddit                & Avg. Rank \\
\midrule
JODIE      & \textbf{89.42$\pm$1.45}   & 61.81$\pm$0.67  & 4.5       \\
DyRep      & 85.66$\pm$1.55   & 65.73$\pm$1.88  & 4.5       \\
TGAT       & 82.39$\pm$2.56   & 68.45$\pm$0.45  & 5.0         \\
TGN        & 85.44$\pm$1.67   & 60.85$\pm$2.25  & 7.0         \\
CAWN       & 83.57$\pm$0.22   & 66.22$\pm$1.04  & 5.5       \\
Graphmixer & 86.90$\pm$0.06   & 65.25$\pm$3.12  & 4.0         \\
TCL        & 81.58$\pm$4.10   & 66.98$\pm$1.25  & 6.0         \\
DyGFormer  & 85.29$\pm$2.79   & 64.51$\pm$2.89  & 6.5       \\
HopeDGN    & 85.69$\pm$0.67   & \textbf{71.20$\pm$1.47}  & \textbf{2.0}   \\
\bottomrule
\end{tabular}
\end{table}

\subsection{Parameter Sensitivity}

\begin{figure}[htbp]
    \centering
    \includegraphics[width=0.9\textwidth]{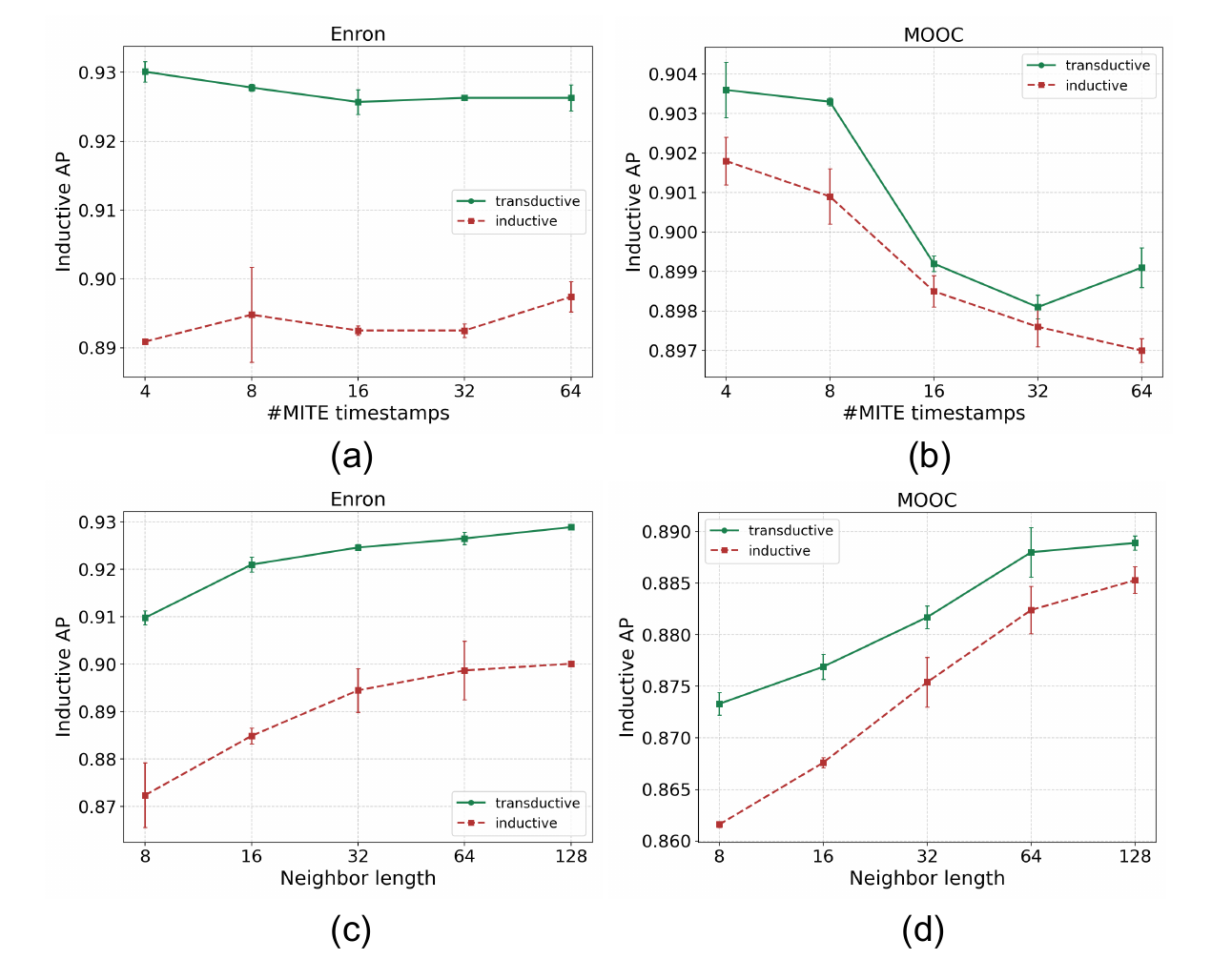}
    \caption{Parameter sensitivity of \model.}
    \label{fig::sensitivity}
\end{figure}

In this section, we evaluate the parameters sensitivity of \model, including non-infinite timestamps number in MITE $K$ and number of neighborhood, on Enron and MOOC datasets. $K$ is searched in range $\{4, 8, 16, 32, 64\}$ and the neighborhood length is searched in range  $\{8, 16, 32, 64, 128\}$. The results are presented in Fig. \ref{fig::sensitivity}. We observe that the performance of \modell is quite stable with varying $K$, on both Enron and MOOC dataset. Additionally, the performance of \modell improves until converges when the input length of neighborhood increases, on both Enron and MOOC datasets. This is reasonable because a larger neighborhood receptive field can help the model more likely perceive non-isomorphic node pairs, thereby learning more expressive representations.

\subsection{Efficiency evaluation}
\label{sec::efficiency}
\begin{figure}[h]
    \centering
    \includegraphics[width=\textwidth]{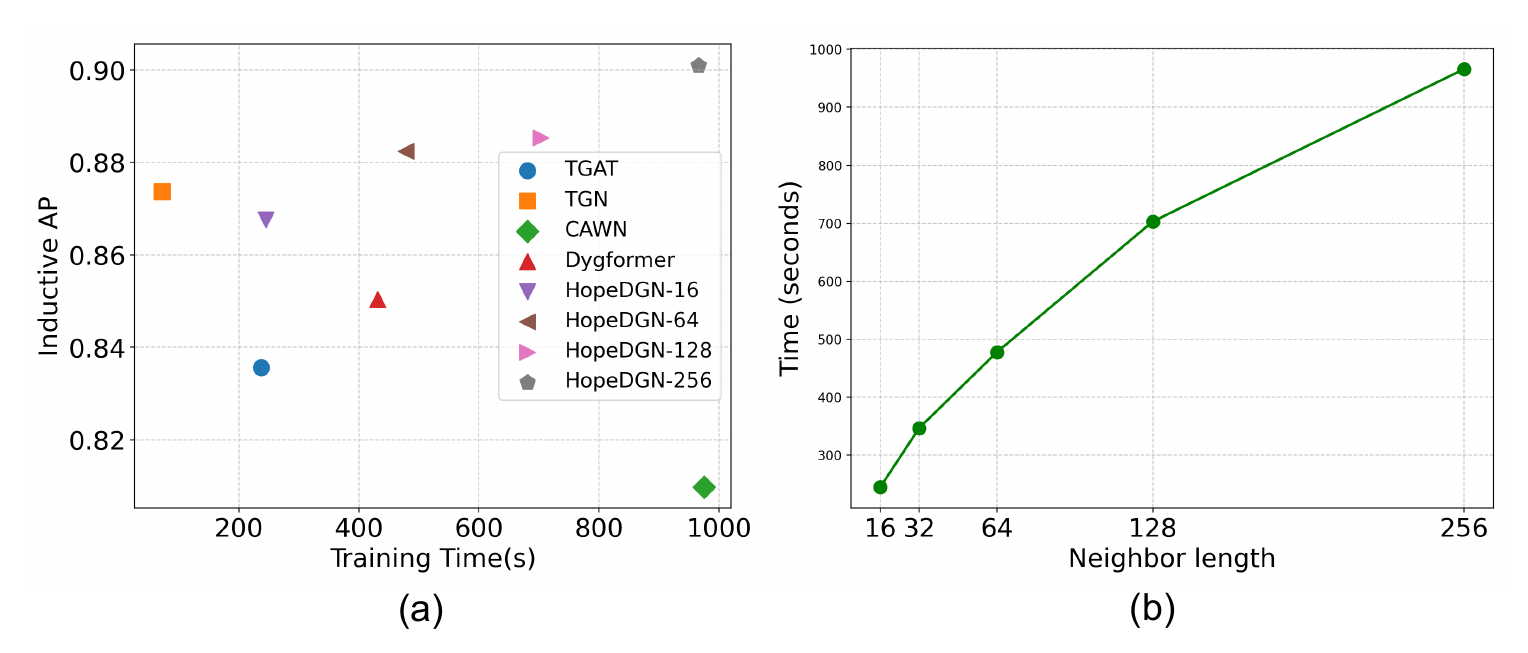}
    \caption{Left: Efficiency-performance comparison of different models on MOOC dataset. The X-axis is the training time per epoch (seconds). The Y-axis is the inductive AP value. '\model-$n$' denotes \modell with input neighbor length of $n$. Right: Training time of \modell with various neighbor length on MOOC dataset.}
    \label{fig::efficiency}
\end{figure}

We compare the efficiency of proposed \modell with other baselines. Specifically, we evaluate the training time per epochs (seconds) of different models on the MOOC dataset, and their inductive AP values are reported together. Note that the optimal parameter settings are adopted for baseline methods. The training time of \modell with various input neighbor length ($\{16,64,128,256\}$) are reported. The results are presented in Fig. \ref{fig::efficiency}. From Fig. \ref{fig::efficiency} (a), we observe that \modell with 256 neighbors is slightly faster than CAWN, but slower than other baselines. However, it can achieve the best performance among all baselines. Additionally, shortening the neighbor length of the \modell can significantly reduce training time while still maintaining good performance. For example, The training time of the \modell with 16 neighbors is only $\sim 24\%$ of the \modell with 256 neighbors, significantly less than the CAWN and DyGFormer models, while its performance is significantly better than TGAT, DyGFormer and CAWN. From Fig. \ref{fig::efficiency} (b), we observe that the training time of the \modell approximately increases linearly with the neighbor length. This result is consistent with our complexity analysis in Sec. \ref{sec::2-dgnn-impl}.

\end{document}